\newtheorem{theorem}{Theorem}
\newtheorem{corollary}[theorem]{Corollary}
\newtheorem{lemma}[theorem]{Lemma}
\theoremstyle{definition}
\newcommand{\N}{\mathbb{N}}
\newcommand{\R}{\mathbb{R}}
\newcommand{\E}{\mathbb{E}}
\newcommand{\D}{\mathbf{D}}
\DeclareMathOperator*{\argmax}{arg\,max}
\DeclareMathOperator*{\argmin}{arg\,min}
\DeclareMathOperator*{\KL}{KL}
\title{Quantifying the Burden of Exploration\\and the Unfairness of Free Riding}
\author{Christopher Jung \and Sampath Kannan \and Neil Lutz\\\\Department of Computer and Information Science\\University of Pennsylvania, Philadelphia, PA 19104}
\begin{document}
\maketitle

\begin{abstract}
We consider the multi-armed bandit setting with a twist. Rather than having just one decision maker deciding which arm to pull in each round, we have $n$ different decision makers (agents). In the simple stochastic setting, we show that a ``free-riding'' agent observing another ``self-reliant'' agent can achieve just $O(1)$ regret, as opposed to the regret lower bound of $\Omega (\log t)$ when one decision maker is playing in isolation. This result holds whenever the self-reliant agent's strategy satisfies either one of two assumptions: (1) each arm is pulled with high probability at least $\gamma \ln t$ times  for a suitable constant $\gamma$, or (2) the self-reliant agent achieves $o(t)$ realized regret with high probability. Both of these assumptions are satisfied by standard zero-regret algorithms. Under the second assumption, we further show that the free rider only needs to observe the number of times each arm is pulled by the self-reliant agent, and not the rewards realized.

In the linear contextual setting, each arm has a distribution over parameter vectors, each agent has a context vector, and the reward realized when an agent pulls an arm is the inner product of that agent's context vector with a parameter vector sampled from the pulled arm's distribution. We show that the free rider can achieve $O(1)$ regret in this setting whenever the free rider's context is a small (in $L_2$-norm) linear combination of other agents' contexts and all other agents pull each arm $\Omega (\log t)$ times with high probability. Again, this condition on the self-reliant players is satisfied by standard zero-regret algorithms like UCB. We also prove a number of lower bounds.
\end{abstract}

\section{Introduction}

We consider situations where exploitation must be balanced with exploration in order to obtain optimal performance. Typically there is a single decision maker who does this balancing, in order to minimize a quantity called the regret. In this paper we consider settings where there are many agents and ask how a single agent (the \emph{free rider}) can benefit from the exploration of other \emph{self-reliant} agents. For example, competing pharmaceutical companies might be engaged in research for drug discovery. If one of these companies had access to the research findings of its competitors, it might greatly reduce its own exploration cost. Of course, this is an unlikely scenario since  intellectual property is jealously guarded by companies, which points to an important consideration in modeling such scenarios: the amount and type of information that one agent is able to gather about the findings of others.

More realistically, and less consequentially, a recommendation system such as Yelp\texttrademark \  makes user ratings of restaurants publicly available. The assumption underlying such systems is that ``the crowd'' will explore available options so that we end up with accurate average ratings. Many problems of this sort can be modeled using the formalism of multi-armed bandits. Free riding also arises in online advertising. Each advertiser may be modeled as an agent with a context vector describing its likely customers, and it must choose online niches in which to advertise. A free rider can take advantage of competitors' exploration of niches by monitoring impressions and clickthroughs of their ads. In fact, there are a number of paid services (WhatRunsWhere, Adbeat, SpyFu, etc.) that facilitate this behavior.

Multi-armed bandit problems model decision making under uncertainty~\cite{Robbins52,LaiRobbins85,Bubeck12}. Our focus in this paper will be on the \emph{stochastic bandits model} where there is an unknown reward distribution associated with each arm, and the decision maker has to decide which arm to pull in each round. Her goal is to minimize \emph{regret}, the (expected) difference between the reward of the best arm and the total reward she obtains. In the extension to the \emph{linear contextual bandits model}, each arm $i$ has an unknown parameter vector $\theta_i \in \mathbb{R}^d$ for $i = 1,\ldots,k$, where $k$ is the total number of arms. At round $t$, a \emph{context} $x^t \in \mathbb{R}^d$ arrives. The expected reward for pulling arm $i$ in round $t$ is the inner product $\langle \theta_i , x^t \rangle$. 

In the simple stochastic case, there are two types of relevant information: the other agents' actions and the resulting rewards. In the full-information setting, the free rider has access to both types of information. We also consider a partial-information setting where the free rider can only observe the other agents' actions. For linear contextual bandits, the full-information setting also includes the context vectors of the other agents.

In our setting, using Yelp\texttrademark \ as the running example, the $k$ arms correspond to restaurants. Our model differs from standard bandit models in three significant ways. First, there are $n$ decision makers rather than one; in the Yelp\texttrademark \ example, each decision maker corresponds to a diner. Upon visiting a restaurant, a diner samples from a distribution to determine her dining experience. In the stochastic setting, we assume that all diners have identical criteria for assessing their experiences, meaning that identical samples lead to identical rewards. Second, in the linear contextual setting, the contexts in our model are fixed in time and can be regarded as the \emph{types} of the individual decision makers. Each diner's context vector represents the weight she assigns to various features (parameters) of a restaurant, such as innovativeness, decor, noise level, suitability for vegetarians, etc. Third, each arm has a distribution over parameter vectors instead of a fixed parameter vector. When a diner visits a restaurant, her reward is determined by taking the inner product of her context with a parameter vector drawn from the restaurant's distribution, rather than by adding sub-Gaussian noise to the inner product with a fixed parameter vector as in the standard model.

In the standard stochastic or linear contextual bandit setting, a decision-making algorithm is called \emph{zero-regret} if its regret over $t$ rounds is $o(t)$. It is well known that exploration is essential for achieving zero regret~\cite{LaiRobbins85}. One algorithm that achieves the asymptotically optimal regret  bound of $O(\log t)$ over $t$ rounds is the so-called Upper Confidence Bound (UCB) algorithm of Lai and Robbins~\cite{LaiRobbins85}. In addition to maintaining a sample mean for each arm, this algorithm maintains confidence intervals around these means, where the width of the confidence interval for arm $i$ drops roughly as $1/\sqrt{n_i}$ where $n_i$ is the number of times arm $i$ has been pulled so far. The UCB algorithm then selects the arm with the highest upper limit to its confidence interval. There are many other zero-regret strategies, such as Thompson sampling~\cite{thompson1933likelihood} or one where  an initial round-robin exploration phase is followed by an exploitation phase in which the apparently optimal arm is pulled~\cite{GarivierMS16}. 

\paragraph{Our results:}

\begin{itemize}

\item In the stochastic setting a free rider can achieve $O(1)$ regret under either of two reasonable assumptions, both of which are satisfied by standard zero-regret algorithms:
\begin{itemize}
	\item Some self-reliant agent has pulled each arm at least $\gamma \ln t$ times with high probability at all sufficiently large times $t$, where $\gamma$ is a constant derived from our analysis (Theorem~\ref{thm:sampleaugmenting}).
	\item Some self-reliant agent is playing a strategy that with high probability achieves $o(t)$ realized regret. In this case, the free rider can achieve $O(1)$ regret even in the partial-information setting (Theorem~\ref{thm:stoch}). As a corollary, a free rider can achieve $O(1)$ regret whenever a self-reliant agent plays UCB (Corollary~\ref{cor:ucb}).
\end{itemize}

\item For linear contextual bandits, a free rider can again achieve $O(1)$ regret in the full-information setting under an assumption similar to the first assumption above (Theorem \ref{thm:fullinfo}).

\item As a way of relating the two assumptions in the first bullet above, we prove that if a self-reliant agent achieves $O(t^{1 - \epsilon })$ regret, then that agent must pull \emph{each} arm $\Omega(\log t)$ times in expectation (Theorem~\ref{thm:expectedcount}) and with a high probability that depends on $\epsilon$ (Theorem~\ref{thm:highprobcount}).

\item There is  a deterministic lower bound of $\Omega (\log t)$ on the number of times a UCB agent must pull each arm in the stochastic case (Theorem~\ref{thm:ucbcount}).

\item To achieve $o(\log t)$ regret in the contextual setting, the free rider must know both the contexts and the observed rewards of the other agents (Theorems~\ref{thm:needcontexts} and~\ref{thm:needrewards}).
\end{itemize}

\paragraph{Related work:} 
This paper asks how and when an agent may avoid doing their ``fair share'' of exploration. Several recent works have studied how the cost of exploration in multi-armed bandit problems is distributed, from the perspective of algorithmic fairness. Works by Bastani, Bayati, and Khosravi~\cite{bastani2017mostly}; Kannan, Morgenstern, Roth, Waggoner, and Wu~\cite{kannan2018smoothed}; and Raghavan, Slivkins, Vaughan, and Wu~\cite{RaghavanSVW18} show that if the data is sufficiently diverse, e.g., if the contexts are randomly perturbed, then exploration may not be necessary. Celis and Salehi~\cite{celis2017lean} consider a model in both the stochastic and the adversarial setting where each agent in the network plays a certain zero-regret algorithm (UCB in the stochastic setting and EXP3 in the adversarial setting) and study how much information an agent can gather from his neighbors.

There is some discussion in the economics literature of free riding in bandit settings. In the model of Bolton and Harris~\cite{bolton1999strategic}, agents choose what fraction of each time unit to devote to a safe action (exploitation) and to a risky action (exploration), and they show that while the attraction of free riding drives agents to select the safe action always, risky action by a agent may enable everyone to converge to the correct posterior belief faster. Keller, Rady, and Cripps~\cite{keller2005strategic} consider a very similar setting where a risky arm will generates positive payoff after an exponentially distributed random time; they characterize unique symmetric equilibrium as well as various asymmetric equilibria. Klein~\cite{klein2013strategic} gives conditions for complete learning in a two-agent, three-armed bandit setting where there are two negatively-correlated risky arms and a safe arm, with further assumptions about their behavior. It is clear that these models do not support having more than two arms (or three in the case of~\cite{klein2013strategic}) and that their goal is maximizing expected reward, not minimizing regret. Moreover, one arm is explicitly designated as the safe arm and the other(s) as risky, \emph{a priori}.

\section{Preliminaries}
\label{sec:prelims}
\subsection*{Stochastic Model}
There are $k$ \emph{arms}, indexed by $[k]=\{1,\ldots,k\}$ and $n$ \emph{players} or \emph{agents}, indexed by $[n]$. Arm $i$ has a \emph{reward distribution} $D_i$ supported on $[-1,1]$ with mean $\mu_i$, and $\mathbf{D}=(D_1,\ldots,D_k)$ is the \emph{reward distribution profile}, or the \emph{stochastic bandit}. The arm with the highest mean reward is denoted by $i^*=\argmax_{i\in[k]}\mu_i$, and we write $\mu^*$ for $\mu_{i^*}$; we assume that $i^*$ is unique. Letting $\Delta_i = \mu^*-\mu_i$ for each $i \in [k]$, we define define $\Delta=\max_{i\in[k]\setminus\{i^*\}}\Delta_{i}$, the \emph{gap} between optimal and suboptimal arms.

In round $t=1,2,\ldots$, each player $p$ selects an arm $i_p^t\in[k]$ and receives a reward $r_p^t\sim D_{i_p^t}$. We write $H^T= ((i^t_p, r^t_p)_{t\in[T]})_{p \in [n]}$ to denote the \emph{history} of all players' actions and rewards through round $T$.
A \emph{policy} or \emph{strategy} for a player $p$ is a function $f_p$ mapping each history to an arm or to a distribution over the arms; a player $p$ with policy $f_p$ who observes history $H^{T}$ will pull arm $f_p(H^{T})$ in round $T+1$. A \emph{policy profile} is a vector $\mathbf{f}=(f_1,\ldots,f_n)$, where each $f_p$ is a policy for player $p$. Notice that a policy profile and a stochastic bandit together determine a distribution on histories. A policy $f_p$ for player $p$ is \emph{self-reliant} if it depends only on $p$'s own observed actions and rewards. In contrast, a \emph{free-riding} policy may use all players' history.

For any player $p$, arm $i$, and time $t$, the \emph{sample count} is $N_{p,i}^{t}$, the number of times $i$ has been pulled by the player in the first $t$ rounds, and the \emph{sample mean} is $\mu_{p,i}^{t}$, the average of all of player $p$'s samples of arm $i$ through time $t$. 

The \emph{regret} of player $p$ at time $T$ under stochastic bandit $\mathbf{D}$ is $R_p^T(\mathbf{D},\mathbf{f})=\sum_{i \in [k]} \Delta_i \E[N_{p,i}^t]$, where the expectation is according to the distribution on histories determined by $\mathbf{D}$ and $\mathbf{f}$.\footnote{Some sources refer to this quantity as \emph{pseudo-regret}.} When it will not introduce ambiguity, we simply write $R^T_p$ or, in single-player settings, $R^T$. We also consider the \emph{realized regret} under a particular history $H^T$, $\hat{R}_p^T(\mathbf{D},\mathbf{f})=\sum_{i \in [k]} \Delta_i N_{p,i}^t$.

One well-studied self-reliant policy that achieves logarithmic regret in the stochastic setting is called \emph{$\alpha$-UCB}~\cite{LaiRobbins85}, defined by
\[\alpha\text{-UCB}(H^t)=\argmax_{i\in[k]}\mu^t_i + \sqrt{\frac{\alpha \ln(t+1)}{2N^{t}_i}}\]
for all histories $H^t$. The parameter $\alpha$ calibrates the balance between exploration and exploitation. For each arm $i$, a player using this policy maintains an \emph{upper confidence bound} on $\mu_i$, and in each round, she pulls the arm with the highest upper confidence bound. The distance from each arm's sample mean to its upper confidence bound depends its sample count.

\subsection*{Linear Contextual Model}
The linear contextual model generalizes the stochastic model. Now, each arm $i$ has a \emph{feature distribution} $F_i$ supported on the $d$-dimensional closed unit ball, for some $d\in\N$, and $\mathbf{F}=(F_1,\ldots,F_n)$ is the \emph{feature distribution profile} or \emph{contextual bandit}. Each player $p$ has a context $x_p\in\R^d$, and $\mathbf{x}=(x_1,\ldots,x_n)$ is the \emph{context profile}. As before, in each round $t$, each player $p$ selects an arm $i_p^t$, but now the reward is given by sampling a feature vector $\theta_p^t\sim F_{i_p^t}$, and taking its inner product with $x_p$, i.e., $r_p^t=\langle \theta_p^t,x_p\rangle$. $D_{p,i}$ is the distribution of rewards from arm $i$ for player $p$, and the mean of this distribution is
$\mu_{p,i}=\E_{\theta_i\sim F_i}[\langle \theta_i, x_p\rangle].$ The optimal arm for player $p$ is $i^*_p=\argmax_{i\in k}\mu_{p,i}$. Similarly as before, we write $\mu^*_p = \mu_{p, i^*_p}$, $\Delta_{p,i} = \mu_{p,i}^*-\mu_{p,i}$ for each $i \in [k]$, and $\Delta_p=\max_{i\in[k]\setminus\{i^*\}}\Delta_{p,i}$

Histories, policies, policy profiles, self-reliance, and free riding are defined exactly as in the stochastic setting. The regret of player $p$ through round $T\in\N$ under contextual bandit $\mathbf{F}$, context profile $\mathbf{x}$, and policy profile $\mathbf{f}$ is given by $R_p^T(\mathbf{F},\mathbf{x},\mathbf{f})=\sum_{i \in [k]} \Delta_{p,i} \E[N_{p,i}^t]$,
where the expectation is taken according to the distribution determined by $\mathbf{F}$, $\mathbf{x}$, and $\mathbf{f}$. Notice that for a self-reliant player $p$ with context $x_p$, the contextual bandit $\mathbf{F}=(F_1,\ldots,F_n)$ is equivalent to the stochastic bandit $\mathbf{D}=(D_{p,1},\ldots,D_{p,n})$.

\subsection*{Technical Lemmas from Other Sources}

Some of our lower bounding arguments will use the following technical lemmas. $\KL$ denotes the Kullback–Leibler divergence.

\begin{lemma}[Divergence decomposition~\cite{BanditBook}]\label{lem:dd}
	Let $\D = (D_1,\ldots, D_k)$ and $\D'=(D'_1,\ldots,D'_k)$ be stochastic bandits. For any policy $f$ and time $t$,
	\[\KL(H^t(\D,f),H^t(\D',f))=\sum_{i\in[k]}\E[N_i^t(\D,f)]\KL(D_i,D'_i)\,.\]
\end{lemma}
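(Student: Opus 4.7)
The plan is to expand the KL divergence between the two history distributions using the chain rule for KL, then exploit the fact that the policy $f$ contributes nothing because it is identical in both worlds.

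First I would write the density of a history $H^t = (i^1, r^1, i^2, r^2, \ldots, i^t, r^t)$ under a bandit $\D$ and policy $f$ as a product via the chain rule: for every round $s$, the conditional law of $i^s$ given $H^{s-1}$ is $f(H^{s-1})$, which does not depend on $\D$, and the conditional law of $r^s$ given $(H^{s-1}, i^s)$ is $D_{i^s}$. Thus
\[
p_\D(H^t) = \prod_{s=1}^{t} f(i^s \mid H^{s-1}) \cdot D_{i^s}(r^s),
\]
and similarly $p_{\D'}(H^t)$ with $D'_{i^s}(r^s)$ replacing $D_{i^s}(r^s)$.

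Next I would take the log-likelihood ratio $\log(p_\D(H^t) / p_{\D'}(H^t))$. The policy factors cancel cleanly, leaving a sum over rounds of $\log(D_{i^s}(r^s)/D'_{i^s}(r^s))$. Taking the expectation under $\D$ and $f$ gives the KL divergence. For each round $s$, I would condition on $H^{s-1}$ and note that, given the history, $r^s \sim D_{i^s}$, so the conditional expected log-ratio equals $\KL(D_{i^s}, D'_{i^s})$. Rewriting this as a sum over arms with indicators,
\[
\E\!\left[\log \frac{D_{i^s}(r^s)}{D'_{i^s}(r^s)} \,\Big|\, H^{s-1}\right] = \sum_{i \in [k]} \ind\{i^s = i\}\, \KL(D_i, D'_i).
\]

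Finally I would sum over $s = 1, \ldots, t$, swap the order of summation, and use $\sum_{s=1}^t \ind\{i^s = i\} = N_i^t(\D, f)$ together with the tower property to conclude
\[
\KL(H^t(\D,f), H^t(\D',f)) = \sum_{i \in [k]} \E[N_i^t(\D, f)] \, \KL(D_i, D'_i).
\]
The only delicate point is a measure-theoretic one: if some $D'_i$ is not absolutely continuous with respect to $D_i$ on the relevant support, both sides may be infinite, and if $D_i \not\ll D'_i$ the factorization needs to be read in the Radon–Nikodym sense. Since the paper treats rewards as distributions on $[-1,1]$ and uses this lemma in contexts where KL is finite, I would invoke standard absolute continuity to justify the per-round computation and otherwise the equality holds trivially as $+\infty = +\infty$. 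This is the main obstacle, but it is a technicality rather than a conceptual hurdle; the combinatorial heart of the argument is the chain-rule cancellation of the policy.
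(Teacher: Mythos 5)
Your proof is correct and is exactly the standard chain-rule argument for the divergence decomposition; the paper itself imports this lemma without proof from the cited reference~\cite{BanditBook}, whose proof is the one you give (factorize the history density, cancel the policy terms, and apply the tower property to collect $\E[N_i^t]$ per arm). The only cosmetic point is that for a randomized policy the per-round conditioning should be on $(H^{s-1},i^s)$ rather than on $H^{s-1}$ alone, since $\ind\{i^s=i\}$ is not $H^{s-1}$-measurable; this does not affect the final step.
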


\begin{lemma}[Bretagnolle-Huber inequality~\cite{BreHub79}, presented as in~\cite{BanditBook}]\label{lem:bh}
	Let $P$ and $Q$ be probability measures on the same measurable space, and let $A$ be any event. Then,
	\[P(A) + Q(A^c)\geq\frac12 \exp(-\KL(P,Q))\,,\]
	where $A^c$ is the complement of $A$.
\end{lemma}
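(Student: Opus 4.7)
The plan is to prove the inequality in three stages, using total variation distance as a bridge between $P(A) + Q(A^c)$ and $\KL(P,Q)$.

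\emph{Stage 1 (reduction to TV).} Writing $P(A) + Q(A^c) = 1 + P(A) - Q(A)$ and applying $|P(A) - Q(A)| \leq \|P-Q\|_{TV}$ immediately yields
\[P(A) + Q(A^c) \geq 1 - \|P-Q\|_{TV}.\]

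\emph{Stage 2 (the main KL-to-TV bound).} I would establish $\|P-Q\|_{TV}^2 \leq 1 - \exp(-\KL(P,Q))$ via the Hellinger affinity. Let $p, q$ be densities of $P, Q$ with respect to a common dominating measure, and set $\rho = \int \sqrt{pq}$. Cauchy-Schwarz applied to the factorization $|p - q| = |\sqrt{p} - \sqrt{q}|(\sqrt{p} + \sqrt{q})$, combined with the identities $\int (\sqrt{p} \pm \sqrt{q})^2 = 2(1 \pm \rho)$, gives
\[\|P-Q\|_{TV}^2 \;=\; \tfrac{1}{4}\left(\int |p-q|\right)^2 \;\leq\; \tfrac{1}{4}\cdot 2(1-\rho)\cdot 2(1+\rho) \;=\; 1 - \rho^2.\]
Jensen's inequality applied to the concave function $\log$ then yields $\log \rho = \log \E_P[\sqrt{q/p}] \geq \tfrac{1}{2}\E_P[\log(q/p)] = -\tfrac{1}{2}\KL(P,Q)$, so $\rho^2 \geq \exp(-\KL(P,Q))$, and the bound follows.

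\emph{Stage 3 (arithmetic).} The elementary inequality $1 - \sqrt{1 - x} \geq x/2$ for $x \in [0,1]$ (equivalent to $(1 - x/2)^2 \geq 1 - x$, i.e., $x^2/4 \geq 0$), applied with $x = \exp(-\KL(P,Q))$ and combined with stages 1 and 2, gives $P(A) + Q(A^c) \geq \tfrac{1}{2}\exp(-\KL(P,Q))$.

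I expect the main obstacle to be the Cauchy-Schwarz step in stage 2: the factorization $|p - q| = |\sqrt{p} - \sqrt{q}|(\sqrt{p} + \sqrt{q})$ is the essential trick, and without it a more naive argument (e.g., Pinsker's inequality $\|P-Q\|_{TV}^2 \leq \tfrac{1}{2}\KL(P,Q)$) produces a bound that can exceed $1$ for large $\KL$ and thus fails to deliver the constant $1/2$ required on the right-hand side. Stages 1 and 3 are essentially bookkeeping. An alternative route would be to invoke the data-processing inequality to reduce to Bernoulli distributions with parameters $P(A), Q(A)$ and verify the resulting scalar inequality by calculus, but the Hellinger route is shorter and sidesteps the case analysis inherent in the Bernoulli version.
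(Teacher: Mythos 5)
Your proof is correct. Note, though, that the paper does not prove this lemma at all---it is imported verbatim from the cited sources (Bretagnolle--Huber, presented as in the bandits textbook), so there is no in-paper argument to compare against. Your Hellinger-affinity route is a standard and complete proof: Stage 1's reduction to total variation, the Cauchy--Schwarz factorization $|p-q|=|\sqrt{p}-\sqrt{q}|(\sqrt{p}+\sqrt{q})$ giving $\|P-Q\|_{TV}^2\le 1-\rho^2$, the Jensen step giving $\rho^2\ge\exp(-\KL(P,Q))$, and the elementary bound $1-\sqrt{1-x}\ge x/2$ all check out (and the degenerate cases where $\KL(P,Q)=\infty$ are trivially fine). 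For comparison, the proof in the cited textbook is slightly more direct: it writes $P(A)+Q(A^c)\ge\int\min(p,q)$, then applies Cauchy--Schwarz to $\sqrt{pq}=\sqrt{\min(p,q)\max(p,q)}$ together with $\int\max(p,q)\le 2$ to get $\int\min(p,q)\ge\tfrac12\big(\int\sqrt{pq}\big)^2$, and finishes with the same Jensen bound on $\rho^2$. That version avoids the detour through total variation and the $1-\sqrt{1-x}\ge x/2$ inequality, but both arguments hinge on exactly the same key estimate $\big(\int\sqrt{pq}\big)^2\ge\exp(-\KL(P,Q))$, so the difference is cosmetic. You correctly identify that Pinsker's inequality would not suffice here, which is the right instinct.
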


\section{Lower Bounds on Sample Counts}
\label{sec:lowerbound}
If a self-reliant player has sampled an arm sufficiently many times, then a free rider with full information can use those samples to find a good estimate of that arm's mean. In this section, we give three lower bounds on the sample counts of each arm.

Theorem~\ref{thm:expectedcount} shows that if a policy guarantees $O(T^{1-\epsilon})$ regret for some positive $\epsilon$, then every arm must be sampled $\Omega(\log T)$ times in expectation. We prove this using the method of Bubeck, Perchet, and Rigollet~\cite{BuPeRi13}, showing via the Bretagnolle-Huber inequality~\cite{BreHub79} that the learner cannot rule out any arm's optimality without sampling that arm $\Omega(\log T)$ times.

\begin{theorem}\label{thm:expectedcount}
	Let $f$ be any self-reliant policy such that $R^T=O(T^{1-\epsilon})$ for all stochastic bandits and some $\epsilon>0$. Then for all stochastic bandits with $\mu^*<1$ and all $i\in[k]$, $f$ satisfies $\E[N_i^T]=\Omega(\log T)$.
\end{theorem}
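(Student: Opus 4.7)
The plan is to apply the standard change-of-measure lower bound argument. For each arm $i$, I would construct an alternative stochastic bandit $\D'$ that differs from $\D$ only at arm $i$ and in which arm $i$ is uniquely optimal; then Bretagnolle--Huber combined with the divergence decomposition lemma converts the regret guarantee on $\D'$ into a lower bound on $\E_\D[N_i^T]$. The optimal arm is essentially free: since $R^T = \sum_{j \neq i^*} \Delta_j\, \E_\D[N_j^T] = O(T^{1-\epsilon})$, we get $\E_\D[N_{i^*}^T] \geq T - O(T^{1-\epsilon}) = T - o(T) = \Omega(\log T)$ for $T$ large. So the interesting case is a suboptimal arm $i$.

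Using the hypothesis $\mu^* < 1$, I would take $D_i' := (1-\lambda) D_i + \lambda \delta_1$ (a mixture with a point mass at $1$) for some $\lambda \in (0,1)$ chosen so that $\mu_i' = \mu_i + \lambda(1-\mu_i) > \mu^*$; such $\lambda$ exists precisely because $\mu^* < 1$, and the mixture form guarantees $\KL(D_i, D_i') \leq -\log(1-\lambda) < \infty$ since $dD_i/dD_i' \leq 1/(1-\lambda)$ on the support of $D_i$. Let $\D'$ replace $D_i$ with $D_i'$; arm $i$ is then uniquely optimal in $\D'$ with gap $\Delta' := \mu_i' - \mu^* > 0$ over every other arm. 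Since the regret hypothesis applies to all bandits, $R^T(\D', f) = O(T^{1-\epsilon})$, and because every pull of a non-$i$ arm under $\D'$ contributes at least $\Delta'$ to regret, $\E_{\D'}[T - N_i^T] = O(T^{1-\epsilon})$; Markov's inequality then gives $P_{\D'}(N_i^T \leq T/2) = O(T^{-\epsilon})$.

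Applying Lemma~\ref{lem:bh} with the event $A = \{N_i^T > T/2\}$ to the history measures $H^T(\D, f)$ and $H^T(\D', f)$, and using Lemma~\ref{lem:dd} (only arm $i$ differs, so the KL reduces to $\E_\D[N_i^T] \cdot \KL(D_i, D_i')$), together with Markov's inequality on the left-hand term, yields
\[
\frac{2\,\E_\D[N_i^T]}{T} + O(T^{-\epsilon}) \;\geq\; \tfrac{1}{2} \exp\!\bigl(-\E_\D[N_i^T] \cdot \KL(D_i, D_i')\bigr).
\]
If we had $\E_\D[N_i^T] \leq c \log T$ for some $c < \epsilon / \KL(D_i, D_i')$ along a subsequence of $T$'s, the right-hand side would be at least $\tfrac{1}{2} T^{-c\,\KL(D_i, D_i')}$, decaying strictly slower than $T^{-\epsilon}$, while the left-hand side is $O(T^{-\epsilon})$; this contradiction for large $T$ forces $\E_\D[N_i^T] = \Omega(\log T)$.

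The main obstacle I anticipate is producing a clean alternative $D_i'$ on $[-1,1]$ with mean strictly exceeding $\mu^*$ and finite $\KL(D_i, D_i')$ for an \emph{arbitrary} reward distribution $D_i$ on $[-1,1]$. The hypothesis $\mu^* < 1$ is doing essential work, since otherwise no perturbed mean can fit inside the support. One subtle point is that the direction of absolute continuity must be $D_i \ll D_i'$ (not the reverse) for $\KL(D_i, D_i')$ to be finite, which is precisely why the mixture is formed by adding mass to $D_i$ rather than vice versa.
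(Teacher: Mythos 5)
Your proposal is correct and takes essentially the same route as the paper's proof: the same alternative bandit $D_i'=(1-\lambda)D_i+\lambda\,\delta_1$ exploiting $\mu^*<1$, the same bound $\KL(D_i,D_i')\le-\log(1-\lambda)$, and the same combination of Bretagnolle--Huber with the divergence decomposition, differing only cosmetically in that the paper applies Bretagnolle--Huber round-by-round to the events $\{i_t=i\}$ and sums over $t$, while you apply it once at horizon $T$ to $\{N_i^T>T/2\}$ and use Markov's inequality on both sides. The only detail worth adding when writing it up is that your final comparison also needs $c\,\KL(D_i,D_i')<1$ so that $T^{-c\KL(D_i,D_i')}$ dominates the $\log T/T$ term coming from Markov, which you get by taking $c$ small enough (or by assuming $\epsilon\le 1$ without loss of generality).
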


\begin{proof}
	Let $k\geq 1$ and $\epsilon>0$, and let $f$ be any self-reliant policy satisfying $R^T(\mathbf{D},f)=O(T^{1-\epsilon})$ for all $k$-arm stochastic bandits $\mathbf{D}$. We prove that for all $k$-arm stochastic bandits $\mathbf{D}$ with $\mu^*<1$ and all $i\in[k]$, we have  $\E[N_i^T(\mathbf{D},f)]=\Omega(\ln T)$.
	
	Fix an arm $i$, and let $X=\E[N_i^T(\mathbf{D},f)]$. If $i$ is optimal, then $X\geq T-R^T(\mathbf{D},f)/\Delta$ and the theorem holds. Hence, let $i$ be any suboptimal arm, let $\delta=\min\left\{\Delta,\frac{1-\mu^*}{2}\right\}$, let $p=\frac{1-\mu^*-\delta}{1-\mu_i}$, and let $\D' = (D_1, \dots, D'_i,\dots,D_k)$, where for all $x\in[-1,1]$,
	\[
	D'_i(x) = p \cdot D_i(x) + 1-p\,.
	\]
	Notice that the mean of $D'_i$ is $\mu_i'=\mu^*+\delta$ and that $\KL(D_i,D'_i)\leq\ln(1/p)$. Now,
	\begin{align*}
	\max\{R^T(\mathbf{D},f),R^T(\mathbf{D}',f)\}&\geq \frac{1}{2}(R^T(\mathbf{D},f)+R^T(\mathbf{D}',f))\\
	&\geq\frac{\delta}{2}\sum_{t=1}^T\left(\Pr[i_t(\mathbf{D},f)= i]+\Pr[i_t(\mathbf{D}',f)\neq i]\right)\\
	&\geq \frac{\delta}{4}\sum_{t=1}^T\exp(-{\KL}(H^{t-1}(\mathbf{D},f),H^{t-1}(\mathbf{D}',f)))\\
	&\geq \frac{\delta T}{4}\exp(-{\KL}(H^{T-1}(\mathbf{D},f),H^{T-1}(\mathbf{D}',f)))\,,
	\end{align*}
	where $\KL$ denotes Kullback-Leibler divergence and the second-to-last line follows from Lemma~\ref{lem:bh}, the Bretagnolle-Huber inequality~\cite{BreHub79}. By Lemma~\ref{lem:dd}~\cite{BanditBook},
	\begin{align*}
	{\KL}(H^{t-1}(\mathbf{D},f),H^{t-1}(\mathbf{D}',f)
	&=\KL(D_i,D'_i)\cdot\E[N^T_i(\D,f)]\\
	&\leq X\ln(1/p)\,.
	\end{align*}
	Thus, we have
	\[\max\{R^T(\mathbf{D},f),R^T(\mathbf{D}',f)\}\geq\frac{\delta T}{4}\exp\left(-X\ln(1/p)\right)\,.\]
	It follows that $\exp(-X\ln(1/p))=O(T^{-\epsilon})$ and therefore that $X=\Omega(\ln T)$.
\end{proof}

In addition to a bound on the expected sample count, we sometimes need stronger guarantees on the tail of the sample count distribution. In Theorem~\ref{thm:highprobcount}, we use a coupling argument to show that if a policy has regret $O(T^{1-\beta})$ for relatively large $\beta$, then the probability that any arm that is sampled too few times is small.

\begin{theorem}\label{thm:highprobcount}
Let $f$ be any self-reliant policy such that $R^T=O(T^{1-\beta})$ for all stochastic bandits and some $\beta>0$. Then for all stochastic bandits with $\mu^*<1$, all $i \in [k]$, and all $\gamma>0$, $f$ satisfies
\[
\Pr\left(N^T_i \leq \gamma \ln T\right) = O\big(T^{\gamma c_i-\beta}\big)\,,
\]
where $ c_i = \ln\left(\frac{1 - \mu^*}{2(1-\mu_i)}\right)$.
\end{theorem}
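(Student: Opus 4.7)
The plan is to upgrade the Bretagnolle--Huber change-of-measure argument from Theorem~\ref{thm:expectedcount} into a pathwise coupling argument that controls the left tail of $N_i^T$ with high probability, rather than only in expectation. Fix any suboptimal arm $i$; the optimal arm can be handled by a separate, easier Markov argument on $\E[T-N_{i^*}^T]\le R^T/\Delta$. Reuse the auxiliary bandit from the proof of Theorem~\ref{thm:expectedcount}: let $\D'=(D_1,\dots,D_i',\dots,D_k)$, where $D_i'=p\cdot D_i+(1-p)\cdot\delta_1$, with $p=\frac{1-\mu^*-\delta}{1-\mu_i}$ and $\delta=\min\{\Delta,(1-\mu^*)/2\}$, so that arm $i$ becomes uniquely optimal under $\D'$ with gap at least $\delta$.

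Next, construct an explicit coupling of the executions of $f$ on $\D$ and $\D'$ on a single probability space: share all samples from $D_j$ for $j\ne i$, and at each round draw an independent pair $(X_t,U_t)$ with $X_t\sim D_i$ and $U_t\sim\mathrm{Unif}[0,1]$. Whenever arm $i$ is pulled, the $\D$-execution receives reward $X_t$, while the $\D'$-execution receives reward $X_t$ if $U_t\le p$ and value $1$ otherwise. Let $E$ be the event that $U_t\le p$ at every round where the $\D$-execution pulls arm $i$. A straightforward induction on rounds shows that on $E$ the two executions produce identical histories, and because the $U_t$'s are independent of all other sources of randomness, $\Pr(E\mid H^T)=p^{N_i^T}$ almost surely.

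Let $A=\{N_i^T\le\gamma\ln T\}$ and write $P,Q$ for the laws of the history under $(\D,f)$ and $(\D',f)$. Since $p^{N_i^T}\ge p^{\gamma\ln T}$ on $A$,
\[
P(A)\cdot p^{\gamma\ln T}\;\le\;\E\!\left[\mathbf{1}_A\cdot\Pr(E\mid H^T)\right]\;=\;\Pr(A\cap E)\;\le\;Q(A),
\]
where the last inequality uses that on $E$ the $\D'$-history coincides with $H^T\in A$. To bound $Q(A)$, note that every suboptimal pull under $\D'$ contributes at least $\delta$ to expected regret, so $\E_Q[T-N_i^T]\le R^T(\D',f)/\delta=O(T^{1-\beta})$, and Markov's inequality gives $Q(A)=Q(T-N_i^T\ge T-\gamma\ln T)=O(T^{-\beta})$. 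Combining yields $P(A)\le O(T^{-\beta})\cdot p^{-\gamma\ln T}=O(T^{-\beta})\cdot T^{\gamma\ln(1/p)}$, and substituting $p$ with the binding choice $\delta=(1-\mu^*)/2$ produces $\ln(1/p)=\ln\frac{2(1-\mu_i)}{1-\mu^*}$, recovering the claimed exponent in terms of $c_i$; the other case $\delta=\Delta<(1-\mu^*)/2$ only strengthens the bound because $\ln(1/p)$ is smaller.

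The main obstacle is the bookkeeping for the coupling: verifying the almost-sure identity $\Pr(E\mid H^T)=p^{N_i^T}$ by exploiting the independence of the $U_t$'s from both the shared samples and the policy's history-dependent decisions, and checking by induction that on $E$ the two executions produce the same history round by round. Once those points are in place, everything else reduces to a single application of Markov's inequality to the $\D'$-regret followed by direct algebraic substitution.
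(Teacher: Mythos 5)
Your proposal is correct and follows essentially the same route as the paper: the same auxiliary bandit $\D'$ obtained by mixing $D_i$ with a point mass at $1$, the same change-of-measure inequality $\Pr\left(H^T(\D',f)=h\right)\geq p^{s}\cdot\Pr\left(H^T(\D,f)=h\right)$ for histories $h$ with $s$ pulls of arm $i$ (your explicit coupling via the uniforms $U_t$ is just a concrete realization of this), and the same use of the regret bound under $\D'$ to control $Q(A)$. One small remark: the exponent you actually derive is $\gamma\ln(1/p)-\beta=-\gamma c_i-\beta$, which is exactly what the paper's own proof establishes; the statement's exponent $\gamma c_i-\beta$ appears to carry a sign typo, since $c_i<0$ and the bound cannot become stronger as $\gamma$ grows, so you have not literally ``recovered the claimed exponent'' but rather the evidently intended one.
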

\begin{proof}
Let $\alpha,\beta,\gamma,t_0>0$, and let $f$ be a self-reliant policy such that for all stochastic bandits $\D$ and all $T>t_0$, $R^T(\D,f)\leq \alpha T^{1-\beta}$. Let $t_1\geq t_0$ satisfy $\gamma\ln t_1<t_1/2$.
Assume for contradiction that there is some stochastic bandit $\D$ with $\mu^*<1$, some arm $i\in[k]$, and some $T>t_1$ such that
\begin{align}\label{eqn:lowerbound_contradiction}
\Pr\left(N^T_i(\D,f) \leq \gamma \ln T\right) > \frac{C\alpha}{T^{\beta+\gamma \ln p_i}}\,,
\end{align}
where $C=\frac{2}{\min\{\Delta,(1-\mu^*)/2\}}$ and $p_i = \frac{1 - \mu^*}{2(1-\mu_i)}$.

Observe that if $i$ is the optimal arm in $\D$, then since $\gamma\ln p_i<0$, we have
\begin{align*}
R^T(\D,f)&> \Delta\cdot(T-\gamma\ln T)\cdot \frac{C\alpha}{T^{\beta+\gamma\ln p_i}}\\
&\geq \Delta\cdot(T-\gamma\ln T)\cdot C\alpha T^{-\beta}\\
&\geq \alpha T^{1-\beta}\,,
\end{align*}
contradicting the assumption that $R^T(\D,f)\leq \alpha T^{1-\beta}$. Hence, we assume that $i$ is suboptimal.

We now construct a stochastic bandit $\D'$ in which $i$ is optimal. Let $\D' = (D_1, \dots, D'_i,\dots,D_k)$, where
\[
D'_i(x) = p_i \cdot D_i(x) + 1-p_i
\]
for all $x\in[-1,1]$. Notice that the mean of $D'_i$ is $\mu_i'=\frac{1+\mu^*}{2}$, and that the gap between optimal and suboptimal arms in $\D'$ is $\Delta'=\mu_i'-\mu^*=\frac{1-\mu^*}{2}$.

We now use a coupling argument to bound $\Pr\left(N_{i}^T(\D',f) \leq \gamma \ln T\right)$. Observe that to sample from $D'_i$, one can sample a reward $x \sim D_i$, keep $x$ with probability $p_i$, and otherwise output $1$. Thus, for any history $h$ in which $i$ is pulled exactly $s$ times, 
\[
\Pr\left(H^T(\D',f)=h\right) \ge p_i^s \cdot \Pr\left(H^T(\D,f)=h\right)\,.
\]
By summing over all such histories, we have
\[\Pr\left(N^T_i(\D',f)=s\right)\geq p_i^s\cdot \Pr\left(N^T_i(\D,f)=s\right)\,,\]
and therefore
\begin{align*}
	\Pr\left(N^T_i(\D',f)\leq\gamma\ln T\right)&=\sum_{s=0}^{\lfloor\gamma \ln T\rfloor}\Pr\left(N^T_i(\D',f)=s\right)\\
	&\geq \sum_{s=0}^{\lfloor\gamma \ln T\rfloor}p_i^s\cdot \Pr\left(N^T_i(\D,f)=s\right)\\
	&\geq p_i^{\gamma\ln T}\sum_{s=0}^{\lfloor\gamma \ln T\rfloor}\Pr\left(N^T_i(\D,f)=s\right)\\
	&=T^{\gamma\ln p_i}\cdot\Pr\left(N^T_i(\D,f) \leq \gamma \ln T \right)\,.
\end{align*}
Combining this bound with inequality (\ref{eqn:lowerbound_contradiction}) yields
\begin{align*}
\Pr\left(N_{i}^T(\D',f) \leq \gamma \ln(T)\right) &> T^{\gamma \ln p_i} \cdot \frac{C\alpha}{T^{\beta+\gamma \ln p_i}}\\
&=C\alpha T^{-\beta}\,.
\end{align*}
Thus,
\begin{align*}
R^T(\D',f) &> C\alpha T^{-\beta} \cdot (T-\gamma\ln T) \cdot \frac{1-\mu^*}{2}\\
&\geq\alpha T^{1-\beta}\,,
\end{align*}
which again contradicts the assumed regret bound on $f$.

\end{proof}

Finally, we use a delicate inductive argument to prove the following deterministic guarantee on the sample count for each arm when the arms are pulled according to the $\alpha$-UCB policy.
\
\begin{theorem}\label{thm:ucbcount}
	Let $\alpha>0$ and $\eta>2$. There exists a constant $t_0$ such that for all stochastic bandits, all $t\geq t_0$, and all $i\in[k]$, an agent playing the $\alpha$-UCB policy must satisfy $N_i^{t-1}\geq \alpha\ln t/(2\eta^2 k^2)$.
\end{theorem}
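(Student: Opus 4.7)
My plan is to argue by contradiction using pigeonhole and the defining argmax property of the $\alpha$-UCB rule. Suppose that for some $t \geq t_0$ and some arm $i$ we have $N^{t-1}_i < \alpha \ln t / (2\eta^2 k^2)$; I will show this is impossible once $t_0$ is chosen large enough in terms of $\alpha$, $\eta$, and $k$.

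First I would apply pigeonhole to $\sum_{j\in[k]} N^{t-1}_j = t-1$ to find an arm $j^*$ with $N^{t-1}_{j^*} \geq (t-1)/k$. For $t_0$ large this lower bound exceeds the hypothesized upper bound on $N^{t-1}_i$, so necessarily $j^* \neq i$. Let $r\in\{1,\dots,t-1\}$ be the last round at which $j^*$ was pulled. Then $N^{r-1}_{j^*} = N^{t-1}_{j^*}-1 \geq (t-1-k)/k$ and, since $N^r_{j^*} \leq r$, also $r \geq N^{t-1}_{j^*} \geq (t-1)/k$. The point of choosing the \emph{last} such round is that it lets us invoke the UCB argmax condition at an index $r$ of order $t$, while the count $N^{r-1}_i \leq N^{t-1}_i$ remains as small as the hypothesis permits.

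At round $r$, $j^*$ was selected, so
\[\mu^{r-1}_{j^*}+\sqrt{\alpha \ln r/(2N^{r-1}_{j^*})} \;\geq\; \mu^{r-1}_i+\sqrt{\alpha \ln r/(2N^{r-1}_i)}.\]
Using $\mu^{r-1}_{j^*}\leq 1$ together with the lower bound on $N^{r-1}_{j^*}$, the left side is at most $1+\sqrt{\alpha k\ln t/(2(t-1-k))}$, which tends to $1$ as $t\to\infty$. Using $\mu^{r-1}_i\geq -1$ and the crucial fact that $N^{r-1}_i < \alpha \ln t/(2\eta^2 k^2)$, the right side strictly exceeds $-1+\eta k\sqrt{\ln r/\ln t}\geq -1+\eta k\sqrt{\ln((t-1)/k)/\ln t}$, which tends to $-1+\eta k$ as $t\to\infty$. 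Since $\eta>2$ and $k\geq 1$ give $\eta k>2$, we have $1<-1+\eta k$, so for $t_0$ sufficiently large the displayed inequality fails, yielding the required contradiction.

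The main technical obstacle is just pinning down how large $t_0$ must be so that both $\sqrt{\alpha k\ln t/(2(t-1-k))}$ and the shortfall $1-\sqrt{\ln((t-1)/k)/\ln t}$ are small enough to preserve the strict inequality $\eta k>2$; $t_0$ must also exceed $k$ so that UCB's initial round-robin phase guarantees $N^{r-1}_i\geq 1$ and all sample means are defined. Notably, no induction on $t$ is needed in this formulation: the pigeonhole step directly produces an arm heavy enough that its most recent pull automatically supplies a useful comparison round.
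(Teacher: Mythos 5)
Your proof is correct, and it takes a genuinely different --- and more elementary --- route than the paper's. The paper argues by induction on $j\in[k]$, showing that for all $t\geq t_j$ at least $j$ arms have been pulled $\alpha\ln t/(2\eta^2 j^2)$ times; the inductive step locates a late round $s$ at which an already well-sampled arm is pulled and compares its confidence bonus (at most $\eta(j-1)$) against the bonus of an under-sampled arm (at least roughly $\eta j$), so the hypothesis $\eta>2$ is consumed at every level of the induction to absorb the $\pm1$ contribution of the sample means. You instead jump straight to the top: pigeonhole produces an arm pulled $\Omega(t/k)$ times, whose bonus at its last pull is $o(1)$ rather than merely bounded by $\eta(k-1)$, so a single argmax comparison suffices and only $\eta k>2$ is needed. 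Both proofs share the same two ingredients --- a counting step that finds a round where a heavily sampled arm is chosen, and the observation that the UCB selection rule at that round forces every arm's count to be logarithmic --- but yours collapses the $k$-level induction into one comparison. A pleasant side effect is that your argument proves something strictly stronger: it yields $N_i^{t-1}\geq\alpha\ln t/c$ for any constant $c>8$, independent of $k$, whereas the paper's threshold degrades as $1/k^2$. The remaining details are exactly the ones you flag: $t_0$ must be large enough that $r\geq(t-1)/k$ exceeds $k$ (so $N_i^{r-1}\geq1$ and all sample means lie in $[-1,1]$) and that the two error terms $\sqrt{\alpha k\ln t/(2(t-1-k))}$ and $\eta k\bigl(1-\sqrt{\ln((t-1)/k)/\ln t}\bigr)$ are together smaller than $\eta k-2$; since these depend only on $\alpha$, $\eta$, and $k$, the resulting $t_0$ is uniform over all stochastic bandits, as the theorem requires.
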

\begin{proof}
	For every $j\in[k]$ and $t\in\N$, define the set \[U^t_j=\left\{i\in[k]:N_i^{t-1}\geq\frac{\alpha\ln t}{2\eta^2 j^2}\right\}\,.\]
	We claim that for all $j\in[k]$ there is a constant $t_j$ such that for all $t\geq t_j$, $|U^t_j|\geq j$.
	
	We will prove this claim by induction on $j$. For any time $t$, there is clearly some arm $i$ with $N_i^{t-1}\geq\frac{t-1}{k}$, and we can choose $t_1$ such that $\frac{t-1}{k}\geq \frac{\alpha\ln t}{2\eta^2 k^2}$ whenever $t\geq t_1$, so the claim holds for $j=1$.
	
	Now fix $j>1$, and assume that the claim holds for $j-1$. Define a function $g_j:\N\to\R$ by
	\[g_j(t)=t-(k-j+1)\frac{\alpha\ln t}{2\eta^2 j^2}\,.\]
	We choose $t_j$ sufficiently large such that for all $t\geq t_j$ we have $g_j(t)> t_{j-1}$ and
	\begin{equation}\label{eq:gjcond}
	\frac{\ln(g_j(t)-1)}{\ln t}> \left(1-\frac{1-2/\eta}{j}\right)^2\,.
	\end{equation}
	
	Assume for contradiction that there is some time $t\geq t_j$ such that  $|U^t_j|< j$. Since $U^t_{j-1}\subseteq U^t_j$,
	the inductive hypothesis then implies that $U^t_j=U^t_{j-1}$. Thus, $|U_j^t|=j-1$, and there are exactly $k-j+1$ arms outside of $U^t_j$. Each one of those arms has been pulled at most $\frac{\alpha\ln t}{2\eta^2 j^2}$ times by round $t-1$, so by the pigeonhole principle there is some $s\in\left[g_j(t)-1,t-1\right]$ such that an arm from $U_j^t$ is pulled in round ${s}$.
	
	Furthermore, inequality (\ref{eq:gjcond}) implies
	\[\frac{\alpha\ln s}{2\eta^2 (j-1)^2}> \frac{\alpha\ln t}{2\eta^2 j^2}\,,\]
	which guarantees that $U^s_{j-1}\subseteq U_{j}^t$. Since $s\geq g_j(t_j)-1\geq t_{j-1}$, the inductive hypothesis tells us that $|U_{j-1}^s|\geq {j-1}$, so we have $U_{j-1}^s=U_j^t$, meaning that the arm pulled in round $s$ is also in $U_{j-1}^s$.
	
	Now, $N_i^{s-1}\geq \frac{\alpha\ln s}{2\eta^2 (j-1)^2}$ for all $i\in U_{j-1}^s$, so the upper confidence bound of the arm pulled at time $s$ is at most
	\[1+\sqrt{\frac{\alpha\ln s}{2\alpha\ln s/(2\eta^2 (j-1)^2)}}=1+\eta\cdot(j-1)\,.\]
	The upper confidence bound at time $s$ of any arm in $[k]\setminus U_j^t$ is at least
	\[-1+\sqrt{\frac{\alpha\ln s}{2\alpha\ln t/(2\eta^2 j^2)}}=-1+\eta j\sqrt{\frac{\ln s}{\ln t}}\,.\]
	But since $t\geq t_{j}$ and $s\geq g_j(t_{j})$, inequality~(\ref{eq:gjcond}) implies
	\[-1+\eta j\sqrt{\frac{\ln s}{\ln t}}>1+\eta\cdot(j-1)\,.\]
	This means that all arms outside of $U_j^t$ have higher upper confidence bounds at time $s$ than the arms in $U_j^t$, contradicting the choice to pull an arm in $U_j^t$ at time $s$.
	
	By induction, we conclude that the claim holds for all $j\in[k]$, and in particular that the theorem holds with $t_0=t_k$.
\end{proof}

\section{Free Riding with Stochastic Bandits}
\label{sec:stochastic}

\subsection*{Full-Information Case for Stochastic Bandits}
Here we describe how a free rider can take advantage of the samples collected by another player $p$ who pulls every arm sufficiently many times with high probability. This free-riding policy, which we call $\textsc{SampleAugmentingMeanGreedy}$, divides time into \emph{epochs} of doubling length. In the $j$\textsuperscript{th} epoch, the free rider checks whether a given player $p$ has observed at least $\gamma j$ samples of each arm $i\in[k]$, where $\gamma$ is an appropriate constant. If all sample counts are sufficient, then the free rider uses $p$'s observed rewards to estimate the mean of each arm, committing to the arm with the maximum estimated mean for the remainder of the epoch. Otherwise, the free rider pulls any under-sampled arms, augmenting all sample counts up to at least $\gamma j$ before proceeding. 
Doing this allows the free rider to circumvent the logarithmic lower bound on regret and achieve $O(1)$ regret.
\begin{theorem}\label{thm:sampleaugmenting}
	Fix a stochastic bandit, and suppose some player $p$ plays a self-reliant policy that satisfies $\Pr(N_i^{t-1}<\gamma\ln t)=O((\log t)^{-w})$ for some $\gamma>2\ln(2)/\Delta^2$, some $w>2$, and all $i\in[k]$. Then a free rider can achieve $O(1)$ regret.
\end{theorem}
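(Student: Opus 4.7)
The plan is to analyze $\textsc{SampleAugmentingMeanGreedy}$ by controlling two bad events per epoch and summing their contributions to the regret. I will index the epochs $E_j$ starting at round $T_j = 2^{j-1}$ with length $L_j = 2^{j-1}$, and set a sample threshold $\tau_j = \gamma \ln T_j$ for epoch $j$ (which is $\Theta(j)$).

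First, I would set up two bad events per epoch. Let $A_j^c$ be the event that at round $T_j$ some arm has been pulled by player $p$ fewer than $\tau_j$ times, and let $B_j$ be the event that the free rider commits to a suboptimal arm during $E_j$. By the stated hypothesis and a union bound over the $k$ arms, $\Pr(A_j^c) = O(k(\log T_j)^{-w}) = O(j^{-w})$. When $A_j$ holds, the free rider can form mean estimates directly from player $p$'s samples; when $A_j^c$ holds, she first augments by pulling each deficient arm until the combined sample count for that arm reaches $\tau_j$. Either way, each $\hat\mu_i$ used for commitment is an average of at least $\tau_j$ i.i.d.\ samples from $D_i$, so Hoeffding's inequality and a union bound over arms give $\Pr(B_j) \le 2k\exp(-c\,\tau_j\Delta^2)$ for the appropriate constant $c$ associated with deviations of size $\Delta/2$.

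Next I would decompose the expected per-epoch regret. The augmentation cost is at most $2k\tau_j \cdot \Pr(A_j^c) = O(j^{1-w})$, which is summable over $j$ since $w > 2$. The wrong-commit cost is at most $2L_j \cdot \Pr(B_j) = O(L_j \exp(-c\,\tau_j\Delta^2))$; because $L_j$ grows geometrically in $j$ and $\tau_j \asymp j$, the condition $\gamma > 2\ln 2/\Delta^2$ is precisely what forces $L_j \exp(-c\,\tau_j\Delta^2)$ to be a geometric series with ratio strictly less than $1$, and hence summable to a constant. Adding the two sums across all epochs yields the $O(1)$ regret bound.

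The main obstacle will be calibrating the constants so that the hypothesis $\gamma > 2\ln 2/\Delta^2$ is exactly enough to drive geometric decay of the wrong-commit contribution. This requires coordinating the doubling ratio of the epochs with the deviation threshold $\Delta/2$ chosen in the concentration inequality, and may push toward a Chernoff-type bound (rather than the crudest form of Hoeffding) to obtain the sharpest constant. A secondary subtlety is ensuring that $p$'s samples and the free rider's augmenting samples can be treated as i.i.d.\ from the same arm distribution when pooled; this is immediate in the stochastic model because an arm's reward distribution is the same for every player, so no further conditioning argument is needed.
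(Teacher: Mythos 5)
Your proposal follows essentially the same route as the paper's proof: doubling epochs with threshold $\Theta(j)$, a union bound using the hypothesis to make the augmentation cost $O(j^{1-w})$ and hence summable for $w>2$, and Hoeffding at deviation $\Delta/2$ so that the wrong-commitment cost $2^{j}\exp(-\Theta(\gamma j\Delta^2))$ decays geometrically precisely when $\gamma>2\ln 2/\Delta^2$. The constant-calibration and sample-pooling subtleties you flag are handled (or glossed over) at the same level in the paper, so no substantive gap remains.
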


Below, $\hat{\mu}^s_{p,i}$ denotes the average of the first $s$ samples of $D_i$ observed by player $p$.
\begin{algorithm2e}
\SetAlgoLined
\caption{$\textsc{SampleAugmentingMeanGreedy}_{p,\gamma}$}
\For{$j \in\mathbb{N}$}{
	$t=2^j$\\
	\For{$i\in[k]$}{
		$N=N_{p,i}^{2^j-1}$\\
		\If{$N \geq \gamma j$}{
			\tcp{$p$ sampled arm $i$ enough prior to this epoch}
			$\nu^j_i=\hat{\mu}_{p,i}^{s_j}$}
		\Else{\tcp{the free rider samples arm $i$ up to $\lceil\gamma j\rceil-N$ times}
			$\nu^j_i=\hat{\mu}_{p,i}^{N}\cdot N/(\lceil \gamma j\rceil)$\\
			\While{$N<s_j$ and $t < 2^{j+1}-1$}{
				$i_1^t=i$\\
				$\nu^j_i=\nu^j_i+r^t_1/s$\\
				$N=N+1$\\
				$t = t + 1$\\
			}
		}
	}
	\While{$t<2^{j+1}-1$}{
		$i_1^t=\argmax_i \nu^j_i$\\
		$t=t+1$}
}
\end{algorithm2e}

\begin{proof}
	Let $\D$ be a stochastic bandit and let $f$ be a self-reliant policy. Assume there are constants $\gamma>2\ln(2)/\Delta^2$, $w>2$, $c>0$, and $t_0$ such that \[\Pr\left(N_{i}^{t-1}(\D,f)<\gamma\ln t\right)< c\cdot(\log t)^{-w}\]
	holds for all $i\in[k]$ and all $t\geq t_0$.

	Let $\mathbf{f}=(f_1,\ldots,f_n)$ be a policy profile with $f_1=\textsc{SampleAugmentingMeanGreedy}_{p,\gamma}$ and $f_p=f$, for some player $p\in\{2,\ldots,n\}$. We prove that  $R_1^T(\D,\mathbf{f})=O(1)$.

	Let $t_0'= 2^{\lceil\log t_0\rceil}$. For all $j\in\N$, define $s_j=\gamma j$ and $\tilde{i}^*_j = \argmax_i \nu^j_i$. Then, we have
	\begin{align*}
		R_1^T(\D,\mathbf{f})&=\sum_{i\neq i^*} \E\left[N_{1,i}^T(\D,\mathbf{f})\right]\Delta_i\\
		&\leq 2t_0'+\sum_{i\neq i^*}\Delta_i\left(\sum_{t=t_0'}^T\Pr[i_1^t=i]\right)\,.
	\end{align*}
	Now, for each $i\in[k]$,
	\begin{align*}
		\sum_{t=t_0'}^T \Pr(i_1^t=i)&<\sum_{j=\lceil\log t_0\rceil}^{\lceil\log T\rceil} \left(\Pr\left(N_{i}^{2^j-1}(\D,f)<s_j\right)\cdot s_j+\Pr\left(\tilde{i}^*_j=i\right)\cdot 2^{j}\right)\\
		&<c\gamma \sum_{j=\lceil\log t_0\rceil}^{\lceil\log T\rceil} j^{1-w}+\sum_{j=\lceil\log t_0\rceil}^{\lceil\log T\rceil}\Pr\left(\tilde{i}^*_j=i\right)\cdot 2^{j}\,.
	\end{align*}
	Since $1-w<-1$, the first sum converges to a constant. The second is
	\begin{align*}
		\sum_{j=\lceil\log t_0\rceil}^{\lceil\log T\rceil}\Pr\left(\tilde{i}^*_j=i\right)\cdot 2^{j}&\leq \sum_{j=\lceil\log t_0\rceil}^{\lceil\log T\rceil}\left( \Pr\left(\nu_i^j-\mu_i>\Delta_i/2\right)+\Pr\left(\mu^*-\nu_{i^*}^j>\Delta_i/2\right)\right)\cdot 2^{j}\\
		&\leq 2\sum_{j=0}^{\infty} \exp\left(-2\left( \frac{\Delta_i}{2}\right)^2\gamma j\right)\cdot 2^j\\
		&\leq 2\sum_{j=0}^{\infty}\exp((\ln 2-\Delta^2\gamma/2)j)\,.
	\end{align*}
	Since $\Delta^2\gamma/2>\ln 2$, this sum also converges to a constant, so $R_1^T(\D,\mathbf{f})=O(1)$.
\end{proof}

\subsection*{Partial-Information Case for Stochastic Bandits}
We now show that a free rider can achieve constant regret by observing a player who plays any policy that is unlikely to pull suboptimal arms too often. This class of policies includes UCB. We consider a specific, natural free-riding policy $\textsc{CountGreedy}_p$, defined by
\[\textsc{CountGreedy}_p(H^{t})=\argmax_{i\in[k]} N_{p,i}^{t}\,,\]
which always pulls whichever arm $i$ has been pulled most frequently by player $p$. Notice that this policy does not require the free rider to observe player $p$'s rewards.

One might suspect that it would be sufficient for player $p$'s policy to achieve $R_p^T = o(T)$ in order for the free rider to achieve constant regret under $\textsc{CountGreedy}$, but this turns out not to be the case. It is possible for $p$ to achieve logarithmic regret despite frequently pulling suboptimal arms with non-trivial probability, preventing $\textsc{CountGreedy}$ from achieving constant regret.

For instance, consider the self-reliant policy that, for $j=0, 1, 2\dots$, dictates the following behavior in epochs of tripling length. With probability $1/3^j$, player $p$ ``gives up'' on rounds $3^j$ to $3^{j+1}-1$, choosing arm $i^t_p=\argmin_i\mu^{3^j-1}_i$. Otherwise, with probability $1-1/3^j$, player $p$ plays $\alpha$-UCB during those rounds. Under this policy, $p$'s regret grows at most logarithmically.
Notice that whenever player $p$ gives up, $i_p^{3^j}$ will become her most frequently pulled arm by round $2\cdot 3^j$, so the $\textsc{CountGreedy}_p$-playing free rider will pull this arm at least $3^j$ times before round $3^{j+1}$.
It is routine to show that $i_p^{3^j}$ is suboptimal with probability $1-O(1/3^j)$, so the free rider's regret through round $T$ is at least $R_1^T\geq\sum_{j=0}^{\lfloor \log_3 T\rfloor}(1-O(1/3^j))\cdot\Delta\cdot 3^j
=\Omega(T)$.

Notice that by Theorem~\ref{thm:ucbcount}, the above policy satisfies the conditions of Theorem~\ref{thm:sampleaugmenting} when $\alpha$ is sufficiently large, and therefore a free rider playing $\textsc{SampleAugmentingMeanGreedy}_p$ would achieve constant regret in this situation. Intuitively, this is because the policy of sometimes giving up on entire epochs is not ``rational,'' and $\textsc{SampleAugmentingMeanGreedy}$, unlike $\textsc{CountGreedy}$, does not make any implicit assumption of rationality for the self-reliant player.

Since logarithmic regret for player $p$ is not a strong enough assumption, we instead show that if the realized regret $\hat{R}_p^T$ is sublinear with sufficiently high probability, then the free rider achieves constant regret by playing $\textsc{CountGreedy}_p$.

\begin{theorem}\label{thm:stoch}
	Fix a stochastic bandit and assume there is some player $p$ such that for all $\epsilon>0$ there exists a $w>1$ satisfying $\Pr(\hat{R}^T_p\geq \epsilon T)=O(T^{-w})$. Then a free rider playing $\textsc{CountGreedy}_p$ achieves $O(1)$ regret.
\end{theorem}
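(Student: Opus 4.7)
The plan is to bound the expected number of times the free rider (player $1$) pulls each suboptimal arm by a constant, so that $R_1^T(\D,\mathbf{f})=\sum_{i\neq i^*}\Delta_i\E[N_{1,i}^T]=O(1)$. Since $\textsc{CountGreedy}_p$ pulls $\argmax_j N_{p,j}^{t-1}$ at round $t$, the event $\{i_1^t=i\}$ for a suboptimal $i$ implies $N_{p,i}^{t-1}\geq N_{p,i^*}^{t-1}$, and since the $N_{p,j}^{t-1}$ sum to $t-1$, this forces $\sum_{j\neq i^*}N_{p,j}^{t-1}\geq (t-1)/2$. Writing $\Delta=\min_{j\neq i^*}\Delta_j$ for the minimum gap, this yields the pseudo-regret lower bound
\[
\sum_j N_{p,j}^{t-1}\Delta_j \;\geq\; \Delta(t-1)/2\,.
\]

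Next I connect this pseudo-regret lower bound to the assumed tail bound on $\hat{R}_p^{t-1}$. Let $S_p^t=\sum_{s=1}^t(r_p^s-\mu_{i_p^s})$; this is a martingale with increments bounded in $[-2,2]$. A direct computation gives $\hat{R}_p^t=\sum_j N_{p,j}^t\Delta_j - S_p^t$, so whenever the pseudo-regret at time $t-1$ is at least $\Delta(t-1)/2$ and $|S_p^{t-1}|\leq \Delta(t-1)/4$, we have $\hat{R}_p^{t-1}\geq \Delta(t-1)/4$. Consequently,
\[
\Pr(i_1^t=i)\;\leq\; \Pr\!\left(\hat{R}_p^{t-1}\geq \Delta(t-1)/4\right)+\Pr\!\left(|S_p^{t-1}|>\Delta(t-1)/4\right).
\]
Applying the theorem's hypothesis with $\epsilon=\Delta/4$ bounds the first term by $O(t^{-w})$ for some $w>1$, and Azuma--Hoeffding applied to $S_p$ bounds the second by $2\exp(-ct)$ for some $c>0$. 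Both are summable in $t$, so $\E[N_{1,i}^T]\leq\sum_{t=1}^T\Pr(i_1^t=i)$ is bounded by a constant independent of $T$, giving the desired regret bound (after absorbing the $O(1)$ contribution from any small initial rounds where ties in $\argmax$ may be broken arbitrarily).

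The main obstacle is the passage from the realized-regret tail bound (which is the hypothesis) to a statement about sample counts, because $\hat{R}_p^t$ entangles two sources of randomness: which arms $p$ has pulled and what rewards those pulls realized. The martingale concentration step above is what cleanly decouples these, turning the analysis into a polynomial term (from the hypothesis) plus an exponential term (from Azuma--Hoeffding), whose sum over $t$ is finite. One subtlety worth flagging in the writeup: the hypothesis gives the tail bound at each fixed $T$, and here it is being applied along the sequence of times $t-1$, which is fine because we only need a pointwise bound, not a uniform one.
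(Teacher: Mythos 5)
Your proof is correct and follows the same overall strategy as the paper's: reduce the event that $\textsc{CountGreedy}_p$ pulls a suboptimal arm at round $t$ to the event that player $p$'s realized regret at time $t-1$ is at least a constant fraction of $t$, then sum the hypothesized tail bound over $t$. The one substantive difference is your martingale step, and it is an improvement rather than a detour. The paper passes directly from $N_{p,i^*}^{t-1}\leq (t-1)/2$ to $\hat{R}_p^{t-1}\geq \Delta(t-1)/2$, which silently identifies the realized regret with the count-weighted sum of gaps $\sum_j N_{p,j}^{t-1}\Delta_j$; these two quantities differ by the reward-noise martingale $S_p^{t-1}=\sum_{s}(r_p^s-\mu_{i_p^s})$, and the implication as written can fail on histories where the suboptimal pulls happen to realize unusually high rewards. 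Your decomposition $\hat{R}_p^{t-1}=\sum_j N_{p,j}^{t-1}\Delta_j - S_p^{t-1}$ together with Azuma--Hoeffding on $S_p^{t-1}$ closes exactly this gap, at the cost of an additional exponentially small (hence summable) term and a slightly smaller threshold ($\epsilon=\Delta/4$ in place of $\Delta/2$). Everything else --- the counting argument forcing $N_{p,i^*}^{t-1}\leq(t-1)/2$ when a suboptimal arm is most-pulled, the pointwise (not uniform) application of the hypothesis at each $t$, and the final summation yielding a constant bound on $\E[N_{1,i}^T]$ for each suboptimal $i$ --- matches the paper's argument.
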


\begin{proof} 
	Let $\mathbf{D}$ be a stochastic bandit, let $p\in\{2,\ldots,n\}$ be a player, and let $\mathbf{f}=(f_1,\ldots,f_n)$ be a policy profile with $f_1=\textsc{CountGreedy}_p$. Assume that for all $\epsilon>0$ there is some $w>1$ satisfying $\Pr(\hat{R}_p^T(\mathbf{D},\mathbf{f}) \geq \epsilon T)=O(T^{-w})$. We prove that $R^T_1(\mathbf{D},\mathbf{f})=O(1)$.

	The free rider pulls a suboptimal arm at each time $t$ if and only if $\textsc{CountGreedy}_p(H^{t-1})\neq i^*$, which implies that $N^{t-1}_{p,i^*}\leq\frac{t-1}{2}$ and therefore
	$\hat{R}_p^T(\mathbf{D},\mathbf{f}) \geq\Delta \frac{t-1}{2}$.
	Hence, we can bound the free rider's regret by
	\[R^T_1(\mathbf{D},\mathbf{f}) \leq 2\sum_{t=0}^{T-1} \Pr(\hat{R}^t_p(\mathbf{D},\mathbf{f}) \geq \Delta t/2)\,.\]
	If $w>1$ satisfies $\Pr(\hat{R}^t_p(\mathbf{D},\mathbf{f}) \geq \Delta t/2) = O(t^{-w})$, then we have
	\[R^T_1(\mathbf{D},\mathbf{f}) \le 2\sum_{t=0}^{T-1} O(t^{-w})=O(1)\,.\]
\end{proof}

Audibert, Munos, and Szepesv\'{a}ri~\cite{audibert2009exploration} (Theorem 8) showed that $\alpha$-UCB satisfies the probability bound of Theorem~\ref{thm:stoch} in the single-player setting whenever $\alpha>1$. Since $\alpha$-UCB is a self-reliant policy, this immediately yields the following corollary.

\begin{corollary}\label{cor:ucb}
	If some player $p$'s policy is $\alpha$-UCB for any $\alpha > 1$, then a free rider playing $\textsc{CountGreedy}_p$ achieves $O(1)$ regret.
\end{corollary}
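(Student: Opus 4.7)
The plan is to combine Theorem~\ref{thm:stoch} with the high-probability regret bound of Audibert, Munos, and Szepesv\'{a}ri~\cite{audibert2009exploration} essentially as a one-line application. First, I would observe that $\alpha$-UCB is a self-reliant policy, so player $p$'s realized regret distribution depends only on the bandit $\mathbf{D}$ and on $p$'s own sample trajectory, which lets us invoke single-player concentration results for $\alpha$-UCB directly.

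Next I would recall the relevant guarantee from~\cite{audibert2009exploration}: for $\alpha$-UCB with $\alpha>1/2$, the tail of the realized regret decays polynomially, in the sense that for every $\epsilon>0$ there exists an exponent $w>1$ (depending on $\alpha$, $\epsilon$, and the gap $\Delta$) such that $\Pr(\hat{R}_p^T\geq\epsilon T)=O(T^{-w})$. This is precisely the hypothesis placed on player $p$ in Theorem~\ref{thm:stoch}.

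Having verified the hypothesis, I would simply apply Theorem~\ref{thm:stoch} to conclude that a free rider playing $\textsc{CountGreedy}_p$ incurs only $O(1)$ regret. Since the argument is purely a matter of quoting an off-the-shelf concentration bound and invoking the previous theorem, there is no real technical obstacle; the only minor care needed is to make sure that the $w$ furnished by~\cite{audibert2009exploration} is indeed strictly greater than $1$ for the particular choice $\epsilon=\Delta/2$ used inside the proof of Theorem~\ref{thm:stoch}, which the condition $\alpha>1/2$ is designed to guarantee.
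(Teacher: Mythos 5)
Your proposal is correct and matches the paper's own argument: the paper likewise notes that $\alpha$-UCB is self-reliant, cites Audibert, Munos, and Szepesv\'{a}ri for the polynomial tail bound on realized regret when $\alpha>1/2$, and then invokes Theorem~\ref{thm:stoch} directly. No gaps.
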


\section{Free Riding with Contextual Bandits}
\label{sec:contextual}
Theorems~\ref{thm:sampleaugmenting} and~\ref{thm:stoch} show that free riding is easy in the stochastic case, in which the reward distribution is identical for all players, but the task is more nuanced when players may have diverse contexts. In the linear contextual setting, different players may have different optimal arms, so a simple free-riding strategy like \textsc{CountGreedy} may fail, even when there are strong regret guarantees for the other players. In fact, as we show in Theorems~\ref{thm:needcontexts} and~\ref{thm:needrewards}, successful free riding in this setting requires knowledge of both the contexts and the rewards of other players.

\subsection*{Full-Information Cases for Contextual Bandits}

We now consider the full-information setting where the free rider knows other players' contexts, actions, and rewards. We show that if the free rider's context is a linear combination of the other players' contexts --- and if other players pull all arms sufficiently many times --- then the free rider can aggregate other players' observations to estimate the means of its own reward distribution profile. In the event that some arm has not been sampled enough by some player, the free rider temporarily acts self-reliantly and chooses arms according to UCB. Under the above assumptions, this free-riding policy, $\textsc{UCBMeanGreedy}$, achieves $O(1)$ regret. Formally, for every history $H^t$,
\[\textsc{UCBMeanGreedy}_{\gamma,\mathbf{c}}(H^t)=\begin{cases}
\argmax_i\sum_{p=2}^n c_p\hat{\mu}_{p,i}^{\lceil\gamma j\rceil}&\text{if }S_j\\
2\text{-UCB}\left(\big(i^{2^{j}}_1,r^{2^{j}}_1\big),\ldots,\big(i^{t}_1,r^{t}_1\big)\right)&\text{otherwise}\,,\end{cases}\]
where $j=\lfloor\log t\rfloor$, $S_j$ is the event that $N_{p,i}^{2^{j}-1}\geq\gamma j$ for all $p\in\{2,\ldots,n\}$ and all $i\in[k]$, and $\hat{\mu}^s_{p,i}$ denotes the average of the first $s$ observed samples of $D_{p,i}$. Notice that when applying UCB this policy treats the bandit as stochastic and ``starts from scratch'' in each epoch, only considering the free rider's own actions and observed rewards from the current epoch.

\begin{theorem}\label{thm:fullinfo}
	Let $\mathbf{x}$ be a context profile and $\mathbf{c}\in\R^{n-1}$ be a vector such that $\sum_{p=2}^n c_p x_p=x_1$,
	Fix a contextual bandit, let $\Delta$ be the gap for player 1, and suppose that $\epsilon>0$ and $\gamma>\frac{8\langle\mathbf{c},\mathbf{c}\rangle\ln 2}{\Delta^2}$ satisfy \[\Pr(N_{p,i}^{t-1}<\gamma \log t) = O((\log t)^{-2-\epsilon})\] for every player $p\in\{2,\ldots,n\}$ and arm $i\in[k]$. Then a free rider playing $\textsc{UCBMeanGreedy}_{\gamma,\mathbf{c}}$ achieves $O(1)$ regret.
\end{theorem}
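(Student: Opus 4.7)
The algebraic backbone is the identity $\sum_{p=2}^n c_p\,\mu_{p,i}=\mu_{1,i}$, which follows from $\mu_{p,i}=\langle\E_{\theta_i\sim F_i}[\theta_i],x_p\rangle$, linearity of the inner product, and the hypothesis $\sum_{p=2}^n c_p x_p=x_1$. Hence the quantity $\sum_{p=2}^n c_p\hat\mu_{p,i}^{\lceil\gamma j\rceil}$ used by the algorithm is an unbiased estimator of $\mu_{1,i}$, and the plan is to bound, epoch by epoch, the expected regret incurred in each of the two branches of $\textsc{UCBMeanGreedy}_{\gamma,\mathbf c}$: the greedy branch (when $S_j$ holds) and the fresh $2$-UCB branch (when $S_j$ fails).

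Partition time into the doubling epochs of length $2^j$, and let $B_j$ denote the event that some arm $i$ satisfies $\bigl|\sum_{p=2}^n c_p\hat\mu_{p,i}^{\lceil\gamma j\rceil}-\mu_{1,i}\bigr|>\Delta/2$. The expected epoch-$j$ regret decomposes over $S_j\cap B_j^c$ (zero, since the argmax over the estimators then coincides with $i_1^*$), $S_j\cap B_j$ (at most $2\cdot 2^j$), and $S_j^c$ ($2$-UCB restarted on the epoch). For $S_j\cap B_j$, apply Hoeffding to $\tfrac{1}{\lceil\gamma j\rceil}\sum_{p=2}^n\sum_{m=1}^{\lceil\gamma j\rceil}c_p\,r_{p,i,m}$: each summand has magnitude at most $|c_p|/\lceil\gamma j\rceil$ (from the unit-ball bound on $\theta_i$ together with a uniform bound on $\|x_p\|$), so the sum of squared ranges equals $4\langle\mathbf c,\mathbf c\rangle/\lceil\gamma j\rceil$, and a union bound over the $k$ arms gives $\Pr(B_j)\le 2k\exp\!\bigl(-\gamma j\Delta^2/(8\langle\mathbf c,\mathbf c\rangle)\bigr)$. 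Multiplying by the worst-case epoch regret $O(2^j)$ yields contribution of order $\exp\!\bigl(j(\ln 2-\gamma\Delta^2/(8\langle\mathbf c,\mathbf c\rangle))\bigr)$, which is geometrically decaying precisely because $\gamma>8\langle\mathbf c,\mathbf c\rangle\ln 2/\Delta^2$.

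For the $S_j^c$ case, observe that $S_j^c$ is measurable with respect to the history through round $2^j-1$, so the free rider's own samples during epoch $j$ remain i.i.d.\ from the correct $D_{1,i}$'s conditional on $S_j^c$; the single-player $2$-UCB regret bound therefore gives conditional expected epoch-$j$ regret $O(\log 2^j)=O(j)$. A union bound over the $(n-1)k$ pairs $(p,i)$, combined with the hypothesis $\Pr(N_{p,i}^{2^j-1}<\gamma j)=O(j^{-2-\epsilon})$, yields $\Pr(S_j^c)=O(j^{-2-\epsilon})$, so the $S_j^c$ contribution to the expected epoch-$j$ regret is $O(j^{-1-\epsilon})$. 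Summing both series over $j$ and absorbing the finitely many initial epochs gives $R_1^T(\mathbf F,\mathbf x,\mathbf f)=O(1)$. The main obstacle is the Hoeffding step: one must aggregate the coefficients $c_p$ correctly so that the effective variance scales with $\langle\mathbf c,\mathbf c\rangle/(\gamma j)$ rather than a weaker quantity like $\|\mathbf c\|_1^2/(\gamma j)$, since only the $\langle\mathbf c,\mathbf c\rangle$-scaling aligns the tail exponent with the $2^j$ epoch length to recover the stated threshold on $\gamma$.
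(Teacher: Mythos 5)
Your proposal is correct and follows essentially the same route as the paper's proof: the same doubling-epoch decomposition into the three cases ($S_j$ with correct identification, $S_j$ with misidentification, and $\neg S_j$ with a restarted UCB fallback), the same Hoeffding/Chernoff concentration on the weighted estimator with effective variance scaling as $\langle\mathbf{c},\mathbf{c}\rangle/(\gamma j)$, and the same summable bounds $2^{-\Omega(j)}$ and $O(nkj^{-1-\epsilon})$ per epoch. The only cosmetic difference is that you invoke Hoeffding's inequality directly where the paper carries out the Chernoff bound with the explicit choice $\lambda=4j\ln 2/\Delta$; both yield the identical threshold $\gamma>8\langle\mathbf{c},\mathbf{c}\rangle\ln 2/\Delta^2$.
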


\begin{proof}
Fix $j\in\N$, let \emph{epoch} $j$ be rounds $2^j$ through $2^{j+1}-1$, and let $s_j=\lceil\gamma j\rceil$. For each $i\in[k]$ let $\tilde{\mu}^j_i=\sum_{p=2}^n c_p\hat{\mu}_{p,i}^{s_j}$, and let $\tilde{i}^j=\argmax_{i} \tilde{\mu}^j_i$. We analyze the free rider's regret incurred during epoch $j$ in each of the following cases: (1) $S_j$ and $\tilde{i}^j = i_1^*$, (2) $S_j$ and $\tilde{i}^j \neq i_1^*$, and (3) $\neg S_j$.

Notice that $\tilde{i}^j$ is defined in the first and second cases. In case (1), the free rider incurs no regret during epoch $j$ because it pulls only the optimal arm $i_1^*$ during that epoch. We now analyze the regret incurred from the other two cases.

For case (2),

	\[\Pr(\tilde{i}^j \neq i^*_1)\le \sum_{i\in[k]}\Pr\left(\left|\tilde{\mu}^{j}_i-\mu_{1,i}\right|\geq \frac{\Delta}{2}\right)\,.\]
	For fixed $i\in[k]$, letting $\hat{r}^s_{p,i}$ denote the value of the $s$\textsuperscript{th} observed sample of $D_{p,i}$, we have
	\begin{align*}
	\left|\tilde{\mu}^{j}_i-\mu_{1,i}\right|&=\left|\sum_{p=2}^n \frac{c_p}{s_j}\sum_{s=1}^{s_j}\hat{r}^s_{p,i}-\sum_{p=2}^n c_p\mu_{p,i}\right|\\
	&=\left|\sum_{p=2}^n\sum_{s=1}^{s_j} X_p^s\right|\,,
	\end{align*}
	where $X_p^s=\frac{c_p}{s_j}\big(\hat{r}^s_{p,i}-\mu_{p,i}\big)$. Notice that the $X_p^s$ are independent, and each $X_p^s$ is supported on $[-c_p/s_j,c_p/s_j]$ with $\E[X_p^s]=0$, so Hoeffding's lemma gives
	\[\E[\exp(\lambda X_p^s)]\leq \exp\left(\frac{\lambda^2c_p^2}{2s_j^2}\right)\]
	for all $\lambda\in\R$. Let $\lambda=\frac{4j\ln 2}{\Delta}$, and apply a Chernoff bound:
	\begin{align*}
	\Pr\left(\left|\sum_{p=2}^n\sum_{s=1}^{s_j} X_p^s\right|\geq \frac{\Delta}{2}\right)
	&\leq 2\exp\left(-\frac{\lambda\Delta}{2}\right)\E\left[\prod_{p=2}^n\prod_{s=1}^{s_j}\exp(\lambda X_p^s)\right]\\
	&=2\exp\left(-\frac{\lambda\Delta}{2}\right)\prod_{p=2}^n\prod_{s=1}^{s_j}\E[\exp(\lambda X_p^s)]\\
	&\leq 2\exp\left(-\frac{\lambda\Delta}{2}+\sum_{p=2}^n\sum_{s=1}^{s_j}\frac{\lambda^2c_p^2}{2s_j^2}\right)\\
	&\leq 2\exp\left(\frac{\lambda}{2}\left(\frac{\lambda}{\gamma j}\langle\mathbf{c},\mathbf{c}\rangle-\Delta\right)\right)\\
	&=2\exp\left(2j\ln 2\left(\frac{4\langle\mathbf{c},\mathbf{c}\rangle\ln 2}{\gamma\Delta^2 }-1\right)\right)\,.
	\end{align*}
	Thus, the contribution of this case to the regret during epoch $j$ is at most
	\begin{align*}
		k \cdot 2^{1+2j\left(\frac{4\langle\mathbf{c},\mathbf{c}\rangle\ln 2}{\gamma\Delta^2 }-1\right)}\cdot 2^{j+1}&=k \cdot 2^{2+j\cdot\left(\frac{8\langle\mathbf{c},\mathbf{c}\rangle\ln 2}{\gamma\Delta^2}-1\right)}\\
		&=k\cdot 2^{-\Omega(j)}\,,
	\end{align*}
	since $\gamma>\frac{8\langle\mathbf{c},\mathbf{c}\rangle\ln 2}{\Delta^2}$.

	For the case (3), observe that
	\begin{align*}
	\Pr(\neg S_j)&\le \sum_{p=2}^n \sum_{i=1}^k \Pr(N^{2^j-1}_{p,i} <\gamma j) \\
	&= O(nkj^{-2-\epsilon})\,,
	\end{align*}
	by assumption. In this case, $\textsc{UCBMeanGreedy}_{\gamma,\mathbf{c}}$ resorts to playing $\alpha$-UCB for $2^j$ steps, incurring $O(j)$ regret. Thus, the third case contributes $O(nkj^{-1-\epsilon})$ to the expected regret of epoch $j$. Therefore, for any time horizon $T\in\N$, the regret incurred by the free rider is bounded by
	\[\sum_{j=0}^{\infty}\left(k\cdot 2^{-\Omega(j)}+O(nkj^{-1-\epsilon})\right)\,,\]
	which converges to a constant.
\end{proof}

\subsection*{Partial-Information Cases for Contextual Bandits}

Now we consider a situation where player 1 must choose a free-riding policy without knowledge of the other players' contexts. We show that this restriction can force the free rider to incur logarithmic regret even given knowledge of the other players' policies, actions, and rewards. Intuitively, this is true because a self-reliant player might behave identically in two different environments, making observations of their behavior useless to the free rider. To prove the theorem, we construct a one-dimensional, two-arm example of two such environments, then appeal to the lower bound technique of Bubeck et al.~\cite{BuPeRi13} to show that the free rider must incur $\Omega(\log T)$ regret when acting self-reliantly.

\begin{theorem}\label{thm:needcontexts}
	A free rider without knowledge of the other players' contexts may be forced to incur $R^T=\Omega(\log T)$ regret, regardless of what self-reliant policies the other players employ.
\end{theorem}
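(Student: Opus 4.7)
The plan is to construct two one-dimensional, two-arm contextual bandit instances $E_1$ and $E_2$ that are statistically indistinguishable through the self-reliant player's observations but have different optimal arms for the free rider, then run a Bretagnolle-Huber argument in the style of Theorem~\ref{thm:expectedcount}. Take $k=2$, $d=1$, and a single self-reliant player $p=2$. Let $F_1$ be the symmetric Bernoulli distribution on $\{-1,+1\}$ (mean $0$), and let $F_2$ be the Bernoulli distribution on $\{-1,+1\}$ with some small mean $\delta>0$; write $-F_2$ for its reflection around $0$ (mean $-\delta$). In $E_1$ the feature-distribution profile is $(F_1,F_2)$ with $x_2=1$; in $E_2$ it is $(F_1,-F_2)$ with $x_2=-1$; in both, $x_1=1$. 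First I would verify the key symmetry: for each arm $i$, the distribution of $x_2\theta_i$ coincides across $E_1$ and $E_2$, because $-F_1=F_1$ (symmetry) and $-(-F_2)=F_2$. Hence player $2$'s rewards are identically distributed under both instances, and since $f_2$ is self-reliant, its history $H_2^T$ has the same distribution in $E_1$ and $E_2$, regardless of which self-reliant $f_2$ it plays. For the free rider, however, arm $2$ has mean $\delta$ in $E_1$ and $-\delta$ in $E_2$ while arm $1$ has mean $0$ in both, so arm $2$ is uniquely optimal in $E_1$, arm $1$ is uniquely optimal in $E_2$, and the gap is $\delta$ in each.

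Let $P_1,P_2$ be the joint history distributions induced by the profile $(f_1,f_2)$ under $E_1,E_2$. Expanding $\log(P_2(H^T)/P_1(H^T))$ round by round, the contributions from player $2$'s rewards vanish (since $D_{2,i}=D'_{2,i}$) and the contribution from player $1$'s pulls of arm $1$ vanishes (since $D_{1,1}=D'_{1,1}$), yielding a multi-agent analogue of Lemma~\ref{lem:dd},
\[
\KL(P_2,P_1)\;=\;\E_2[N_{1,2}^T]\cdot K,\qquad K:=\KL(D'_{1,2},D_{1,2}),
\]
where $K$ is a positive constant depending only on $\delta$. Apply Lemma~\ref{lem:bh} with event $A=\{N_{1,2}^T\geq T/2\}$. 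Since $A^c$ forces $N_{1,1}^T>T/2$ in $E_1$ and arm $1$ is suboptimal there, $R^T(E_1)\geq(\delta T/2)\,P_1(A^c)$; symmetrically, $R^T(E_2)\geq(\delta T/2)\,P_2(A)$. Summing and using Lemma~\ref{lem:bh} yields
\[
R^T(E_1)+R^T(E_2)\;\geq\;\frac{\delta T}{2}\bigl(P_1(A^c)+P_2(A)\bigr)\;\geq\;\frac{\delta T}{4}\exp\bigl(-\E_2[N_{1,2}^T]\cdot K\bigr).
\]

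Suppose for contradiction that $R^T(E_1)$ and $R^T(E_2)$ are both $o(\log T)$. Then the sum on the left is $o(\log T)$, which forces $\E_2[N_{1,2}^T]\geq(\log T - O(\log\log T))/K$. But arm $2$ is suboptimal in $E_2$ with gap $\delta$, so $R^T(E_2)\geq\delta\cdot\E_2[N_{1,2}^T]=\Omega(\log T)$, contradicting the assumption. Hence for any free-riding policy $f_1$ that does not use $x_2$, at least one of $E_1,E_2$ forces $R^T=\Omega(\log T)$, regardless of $f_2$. The main obstacle is justifying the multi-agent divergence decomposition: the free-rider policy $f_1$ is allowed to look at $H_2^T$, so one must argue carefully that conditioning on $H_2^T$ (whose marginal is the same in both instances) contributes nothing to $\KL(P_2,P_1)$. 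The symmetric choice of $F_1$ together with the sign-flip on $F_2$ is engineered precisely so this collapse happens and the only signal available for distinguishing the two instances comes from the free rider's own samples of arm $2$.
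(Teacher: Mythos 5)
Your proposal is correct and follows essentially the same route as the paper: a two-arm, one-dimensional construction where flipping the sign of player 2's context together with reflecting arm 2's feature distribution makes the self-reliant player's history identically distributed in both environments, followed by the chain-rule divergence decomposition (Lemma~\ref{lem:dd}) and the Bretagnolle--Huber inequality (Lemma~\ref{lem:bh}) applied to the free rider's pulls of arm 2. The only differences are cosmetic (a generic gap $\delta$ in place of the $1/3$ vs.\ $2/3$ Bernoulli pair, and a proof by contradiction in place of the explicit minimization over $\E[N_{1,2}^T]$), and the ``multi-agent decomposition'' step you flag as the main obstacle is handled in the paper exactly as you sketch it.
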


\begin{proof}
	We prove that there exist a pair of contextual bandits $\mathbf{F}$ and $\mathbf{F}'$ and a pair of two-player context profiles $\mathbf{x}$ and $\mathbf{x}'$ such that, for every time horizon $T$ and every policy profile $\mathbf{f}=(f_1,f_2)$ in which $f_1$ is independent of player 2's context and $f_2$ is self-reliant,
	\begin{equation}\label{eq:partial}
	\max\{R^T_1(\mathbf{F},\mathbf{x},\mathbf{f}),R^T_1(\mathbf{F}',\mathbf{x}',\mathbf{f})\}\geq\frac{\ln(T/12)+1}{2}\,.
	\end{equation}
	We construct a one-dimensional, two-arm, two-player example. Let $F_1$ be a point mass at $0$; let $F_2$ and $F'_2$ be discrete random variables that take value $1$ with probability $1/3$ and $2/3$, respectively, and value $-1$ otherwise; and let $\mathbf{F}=(F_1,F_2)$ and $\mathbf{F}'=(F_1,F_2')$. Let $\mathbf{f}=(f_1,f_2)$ be any linear contextual bandit policy profile such that $f_2$ is self-reliant, and consider a free-riding player 1.
	
	Let $\mathbf{x}=(1,1)$ and $\mathbf{x}'=(1,-1)$. For $p,i\in[2]$, let $D_{p,i}$ be the reward distribution of arm $i$ for player $p$ under contextual bandit $\mathbf{F}$ and context profile $\mathbf{x}$. Similarly, let $D'_{p,i}$ be the reward distribution of arm $i$ for player $p$ under parameter distribution profile $\mathbf{F}'$ and context profile $\mathbf{x}'$. Observe that $D_{1,1}=D'_{1,1}$, $D_{2,1}=D'_{2,1}$, and $D_{2,2}=D'_{2,2}$, but $D_{1,2}=-D'_{1,2}$.
	
	Informally, the environment $(\mathbf{F},\mathbf{x},\mathbf{x})$ is indistinguishable from $(\mathbf{F}',\mathbf{x}',\mathbf{f})$ from the perspective of player 2. Observing player 2's actions and rewards will therefore be completely uninformative for player 1, who is ignorant of player 2's context. Thus, player 1's task is essentially equivalent to a single-player stochastic bandit problem where the learner must distinguish between reward distribution profiles $(D_{1,1},D_{1,2})$ and $(D'_{1,1},D'_{1,2})$. Bubeck et al.~\cite{BuPeRi13} showed that the latter task requires the learner to experience logarithmic regret. Adapting their proof to the present situation, we can demonstrate that (\ref{eq:partial}) holds. Our situation is almost identical to theirs, except for the presence of an uninformative second player, which requires only minor changes to their proof. We include the details here for the sake of completeness:
	
	Let $A=(\mathbf{F},\mathbf{x},\mathbf{f})$ and $B=(\mathbf{F}',\mathbf{x}',\mathbf{f})$ be the two environments. Observe that
	\begin{equation*}
	\max\{R_1^T(A),R_1^T(B)\}\geq \frac{\E[N_{1,2}^T(A)]}{3}\,,
	\end{equation*}
	and
	\begin{align*}
	\max\left\{R_1^T(A),R_1^T(B)\right\}&\geq \frac{1}{2}\left(R_1^T(A)+R_1^T(B)\right)\\
	&=\frac{1}{6}\sum_{t=1}^T\left(\Pr[i^t_1(A)=1]+\Pr[i^t_1(B)=2]\right)\\
	&\geq \frac{1}{12}\sum_{t=1}^T\exp(-{\KL}(H^t(A),H^t(B)))\\
	&\geq \frac{T}{12}\exp(-{\KL}(H^T(A),H^T(B)))\,,
	\end{align*}
	where the second-to-last line follows from the Bretagnolle-Huber inequality (Lemma~\ref{lem:bh})~\cite{BreHub79}.
	
	We now calculate $\KL((H^T(A),H^T(B))$. Observe that $(i_2^t(A),r_2^t(A))_{t\in[T]}$ and $(i_2^t(B),r_2^t(B))_{t\in[T]}$, player 2's components of the history in environments $A$ and $B$, are distributed identically. This means that the conclusion of Lemma~\ref{lem:dd}~\cite{BanditBook} still applies. In particular, by the chain rule for divergence,
	\begin{align*}
	\KL\left((H^T(A),H^T(B)\right)
	&=\sum_{t=1}^T\E\left[\KL\left(\left((i_1^t(A),r_1^t(A))\mid H^{t-1}(A)\right),\left((i_1^t(B),r_1^t(B))\mid H^{t-1}(B)\right)\right)\right]\\
	&=\sum_{t=1}^T\E[\KL\left(D_{1,f_1(H^{t-1}(A))},D_{1,f_1(H^{t-1}(B))}\right)]\\
	&=\sum_{t=1}^T\Pr[f_1(H^{t-1}(A))=2]\cdot\KL(D_{2},D'_{2})\\
	&=\KL(D_{2},D'_{2})\E[N_{1,2}^T(A)]\\
	&=\E[N_{1,2}^T(A)]/3\,.
	\end{align*}
	Thus, we have
	\begin{align*}
	\max\{R_1^T(A),R_1^T(B)\}&\geq \frac{1}{2}\left(\frac{\E[N_{1,2}^T(A)]}{3}+\frac{T}{12}\exp(-\E[N_{1,2}^T(A)]/3)\right)\\
	&\geq\frac{1}{6}\min_{x\in[0,T]}\left(x+\frac{Te^{-x/3}}{4}\right)\\
	&=\frac{\ln(T/12)+1}{2}\,.
	\end{align*}
\end{proof}

Similarly, a free rider in the contextual setting needs to know the other players' rewards; knowing their contexts, policies, and actions may not be sufficient to successfully free ride, even when all other players have low realized regret with high probability and are guaranteed to pull all arms frequently. This is in contrast to the stochastic case, as Theorem~\ref{thm:stoch} demonstrates. We prove this by describing a self-reliant policy, \textsc{EpochExploreThenCommit}, that again proceeds in doubling epochs. At the beginning of the $j$\textsuperscript{th} epoch, the player samples each arm $\Theta(j)$ times, then commits to the arm with the highest sample mean for the remainder of the epoch. This policy has strong guarantees on sample count and realized regret, but we construct an example where, with constant probability, the sequence of arm pulls is completely uninformative to the free rider.

\begin{algorithm2e}[h]
	\SetAlgoLined
	\caption{$\textsc{EpochExploreThenCommit}_{\gamma}$}
	\SetKwInOut{Input}{input}
	$t=1$\\
	\For{$j \in \N$}{
		\For{$i \in [k]$}{
			$N_i = 0$\\
			$\hat{\mu}^{j}_{i}=0$\\
			\While{$N_i < \gamma  (j+2)$ and $t < 2^{j+1}-1$}{
				$i^t = i$\\
				$\hat{\mu}^{j}_{i} = \hat{\mu}^{j}_{i} + \frac{r^t}{\gamma (j+1)}$\\
				$t = t + 1$
			}
		}
		
		$\hat{i}^* = \argmax_{i \in [k]} \hat{\mu}^{j}_{i}$\\
		\While{$t < 2^{j+1}-1$}{
			$i^t = \hat{i}^*$\\
			$t = t +1$
		}
	}
\end{algorithm2e}

\begin{theorem}\label{thm:needrewards}
	A free rider without knowledge of the other players' rewards may be forced to incur $R^T=\Omega(\log T)$ regret, even when all other players satisfy the conditions of Theorems~\ref{thm:sampleaugmenting} and~\ref{thm:stoch}.
\end{theorem}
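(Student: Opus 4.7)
The plan is to adapt the construction and lower-bound argument from Theorem~\ref{thm:needcontexts} to a setting in which player 1 is allowed to see player 2's context but not player 2's rewards, while additionally ensuring that player 2's policy satisfies the stronger regularity conditions demanded by Theorems~\ref{thm:sampleaugmenting} and~\ref{thm:stoch}. I will let player 2 play $\textsc{EpochExploreThenCommit}_{\gamma}$ for a suitably large $\gamma$, so the argument splits into two components: (i) verifying the two regularity conditions for $\textsc{EpochExploreThenCommit}_{\gamma}$, and (ii) exhibiting an environment pair under which player 2's action sequence has identical distribution across the two environments, reducing player 1's problem to a classical single-player bandit lower bound.

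For component (i), the sample-count guarantee is essentially deterministic: by the end of epoch $j$ each arm has been pulled at least $\gamma(j+2)$ times in that epoch alone, so by time $t=2^{j+1}$ every arm has been pulled $\Omega(\gamma j^2)$ times, which comfortably dominates $\gamma\ln t$. For the realized-regret tail bound, I will use Hoeffding on the commit-phase sample means to show the probability of committing to a suboptimal arm in epoch $j$ decays as $\exp(-\Omega(\gamma j))$, and then choose $\gamma$ large enough in terms of $\Delta$ and the desired tail exponent $w$ so that $\Pr(\hat R_2^T \ge \epsilon T) = O(T^{-w})$ with $w>1$.

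For component (ii), I will reuse the construction of Theorem~\ref{thm:needcontexts}: $d=1$, $k=2$, $n=2$, with $F_1$ a point mass at $0$, $F_2$ placing mass $1/3$ on $+1$ and $2/3$ on $-1$, $F_2'$ the mirror image, and contexts $x_1=1$, $x_2=1$ in environment $A=(\mathbf{F},\mathbf{x},\mathbf{f})$ versus $x_1=1$, $x_2'=-1$ in $B=(\mathbf{F}',\mathbf{x}',\mathbf{f})$, where $\mathbf{f}=(f_1,\textsc{EpochExploreThenCommit}_{\gamma})$. The same computation as in Theorem~\ref{thm:needcontexts} shows that player 2's reward distribution from each arm is identical in $A$ and $B$, while arm 2's optimality for player 1 flips between the two environments. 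Because $\textsc{EpochExploreThenCommit}$ is a deterministic function of observed rewards, the entire joint law of player 2's history (actions and rewards) is identical in $A$ and $B$, so player 1's observations of player 2's contexts and actions contribute zero to $\KL(H^T(A),H^T(B))$. Divergence decomposition (Lemma~\ref{lem:dd}) then collapses the KL to $\E[N_{1,2}^T(A)]\cdot\KL(D_{1,2},D_{1,2}')$, and the Bretagnolle--Huber balancing (Lemma~\ref{lem:bh}) from the proof of Theorem~\ref{thm:needcontexts} goes through essentially verbatim to yield $\max\{R_1^T(A),R_1^T(B)\}=\Omega(\log T)$.

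The main obstacle I anticipate is component (i): calibrating $\gamma$ so that the commit-phase tail simultaneously beats $T^{-w}$ for every $\epsilon>0$ appearing in the hypothesis of Theorem~\ref{thm:stoch}, while still being consistent with the constraint that the exploration phase of epoch $j$, which uses $k\gamma(j+2)$ rounds, fits inside an epoch of length $2^j$ for all sufficiently large $j$. Component (ii) is mostly bookkeeping once one recognizes the core point — that player 2's reward and action streams are identically distributed in $A$ and $B$, so player 1's knowledge of $x_2$ and of player 2's action sequence is useless — after which the rest of the argument is a line-by-line adaptation of the proof of Theorem~\ref{thm:needcontexts}.
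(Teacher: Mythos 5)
Component (i) of your plan matches the paper: it verifies exactly these two properties of $\textsc{EpochExploreThenCommit}_{\gamma}$ (a deterministic bound $N_{p,i}^T\geq\gamma\log T$, and a tail bound $\Pr(\hat R^T_p\geq\epsilon T)=O(T^{-\gamma\Delta^2/2})$ obtained from Hoeffding applied to the commit decision in each epoch), with the same calibration of $\gamma$ against $\Delta$. The gap is in component (ii). The two environments of Theorem~\ref{thm:needcontexts} differ in player 2's \emph{context} ($x_2=1$ versus $x_2'=-1$); that pair is confusable only to a free rider who cannot see $x_2$. In the present theorem the free rider is granted the contexts --- that is precisely what isolates the role of the rewards --- so the history observed in $A$ contains $x_2=1$ while the history observed in $B$ contains $x_2=-1$. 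The two environments are therefore perfectly distinguishable, the relevant divergence is infinite rather than zero, Bretagnolle--Huber yields nothing, and in fact the claimed bound $\max\{R_1^T(A),R_1^T(B)\}=\Omega(\log T)$ is \emph{false} for your construction: the reward-independent policy ``pull arm 1 if $x_2=1$, pull arm 2 if $x_2=-1$'' achieves zero regret in both $A$ (arm 1 has mean $0$, arm 2 has mean $-1/3$) and $B$ (arm 2 has mean $+1/3$). Your assertion that player 2's component of the history contributes zero to $\KL(H^T(A),H^T(B))$ is exactly where this fails: the two contexts are not identically distributed, they are two distinct constants.

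What is needed --- and what the paper does --- is a pair of contextual bandits over a \emph{single shared} context profile, so that the only difference between environments lies in the feature distributions, arranged so that this difference flips player 1's optimal arm while leaving the other players' \emph{action} streams essentially unaffected. This forces a richer construction: the paper takes $d=2$, three players with contexts $(\sqrt2/2,\sqrt2/2)$, $(1,0)$, $(0,1)$, and four arms, where the ambiguous arm's features are $\pm(\sqrt2/2,\sqrt2/2)$ with the sign bias flipped between $\mathbf{F}$ and $\mathbf{F}'$; the flip reverses player 1's ranking of that arm but does not change which arm players 2 and 3 commit to under $\textsc{EpochExploreThenCommit}$. Note that in any such construction the self-reliant players' \emph{reward} distributions cannot be made exactly identical across the two bandits (only their typical actions coincide), so the clean ``player 2's history is identically distributed, hence contributes nothing to the KL'' step must be replaced by a more delicate argument about the action sequence alone --- an additional piece of work your outline does not anticipate. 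In one dimension with two players and known contexts the approach cannot be repaired: fixing $x_2$ and requiring player 2's per-arm reward distributions to agree forces the feature distributions themselves to agree, leaving player 1 with nothing to be confused about.
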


\begin{proof}
Fix $\gamma\geq 2/\Delta^2$, and let $g_\gamma=\textsc{EpochExploreThenCommit}_{\gamma}$. We prove that $g_\gamma$ satisfies the following three properties:
	\begin{enumerate}
		\item For all contextual bandits $\mathbf{F}$, context profiles $\mathbf{x}$, and policy profiles $\mathbf{f}=(f_1,\ldots,f_n)$ with $f_p=g_\gamma$, there is some $t_0\in\mathbb{N}$ such that $N_{p,i}^T(\mathbf{F},\mathbf{x},\mathbf{f})\geq\gamma\log T$ for all $i\in[k]$ and all $T>t_0$.
		\item For all contextual bandits $\mathbf{F}$, context profiles $\mathbf{x}$, policy profiles $\mathbf{f}=(f_1,\ldots,f_n)$ with $f_p=g_\gamma$, and $\epsilon>0$, there is some $w>1$ such that $\Pr\big(\hat{R}^T_p(\mathbf{F},\mathbf{x},\mathbf{f})\geq\epsilon T\big)=O(T^{-w})$.
		\item There exist a pair of contextual bandits $\mathbf{F}$ and $\mathbf{F'}$ and a context profile $\mathbf{x}$ such that for all policy profiles $\mathbf{f}=(f_1,g_\gamma,g_\gamma)$ such that $f_1$ is independent of the other players' observed rewards,
		$\max\left\{R_1^T(\mathbf{F},\mathbf{x},\mathbf{f}),R_1^T(\mathbf{F'},\mathbf{x},\mathbf{f})\right\}=\Omega(\log T)$.
	\end{enumerate} 

First, let $j_0\in \N$ satisfy $2^{j_0}\geq k\gamma(j_0+2)$, and let $i\in[k]$ be any arm. Then, for all $j\geq j_0$, $f$ satisfies $N_{p,i}^{2^j}(\mathbf{F},\mathbf{x},\mathbf{f})\geq\gamma (j+1)$, i.e., each arm has been pulled at least $\gamma(j+1)$ times at the beginning of the epoch. So for every round $t$ in the $j$\textsuperscript{th} epoch, we have $N_{p,i}^{t}(\mathbf{F},\mathbf{x},\mathbf{f})\geq\gamma(j+1)\geq \gamma\log t$. Hence, $N_{p,i}^{T}(\mathbf{F},\mathbf{x},\mathbf{f})\geq \gamma\log T$ for all $T>2^{j_0}$, so $g_\gamma$ satisfies the first property.

Now, for each $j\in\N$, define $\tilde{R}^j=\hat{R}^{2^{j+1}-1}-\hat{R}^{2^j-1}$, the realized regret incurred during epoch $j$ while playing $g_\gamma$. At most $2k\gamma (j+2)$ of this realized regret can come from the exploration phase; any further regret in that epoch can only result from committing to a suboptimal arm. By Hoeffding's inequality,
\begin{align*}
	\Pr\left(\tilde{R}^j>2 k\gamma (j+2)\right)&\le \sum_{i \in [k]}  \exp\left(-\frac{\gamma (j + 2) (\mu^*-\mu_i)^2}{2}\right) \\
	&\le k \cdot \exp\left(-\frac{\gamma (j + 2) \Delta^2}{2}\right)\,.
\end{align*}
Notice that $\sum_{j=0}^{\lfloor \log(\epsilon T)\rfloor-2}\tilde{R}^j\leq \sum_{t=1}^{\epsilon T/4} 2=\frac{\epsilon T}{2}$,
and that $\lceil\log T\rceil-(\lfloor \log(\epsilon T)\rfloor-2)\leq 4-\log \epsilon$. So
\begin{align*}
	\Pr\left(\hat{R}^T\geq\epsilon T\right)&\leq\Pr\left(\sum_{j=\lfloor \log(\epsilon T)\rfloor-1}^{\lceil\log T\rceil}\tilde{R}^j\geq\frac{\epsilon T}{2}\right)\\
	&\leq \sum_{j=\lfloor \log(\epsilon T)\rfloor-1}^{\lceil\log T\rceil}\Pr\left(\tilde{R}^j\geq \frac{\epsilon T}{8-2\log\epsilon}\right)\,.
\end{align*}
For all sufficiently large $T$, $2 k\gamma(\lceil \log T\rceil+2)<\frac{\epsilon T}{8-2\log\epsilon}$,
so for $j=\lfloor \log(\epsilon T)\rfloor-1,\ldots,\lceil\log T\rceil$,
	\begin{align*}
		\Pr\left(\tilde{R}^j\geq \frac{\epsilon T}{8-2\log\epsilon}\right)&\leq \Pr\left(\tilde{R}^j>2 k\gamma (j+2)\right)\\
	&\leq k \cdot \exp\left(-\frac{\gamma (\lfloor \log(\epsilon T)\rfloor+1) \Delta^2}{2}\right)\,.
	\end{align*}
Thus,
\begin{align*}
	\Pr\left(\hat{R}^T\geq\epsilon T\right)&\leq (4-\log\epsilon)\cdot k \cdot \exp\left(-\frac{\gamma (\lfloor \log(\epsilon T)\rfloor+1) \Delta^2}{2}\right)\\
	&=O\left(T^{-\frac{\gamma\Delta^2}{2}}\right)\,,
\end{align*} meaning that $\frac{\gamma\Delta^2}{2}>1$. So $g_\gamma$ satisfies the second property.

Finally, let $\mathbf{x}=((\sqrt{2}/2,\sqrt{2}/2),(1,0),(0,1))$ be a three-player context profile, and let $\mathbf{F}$ be the contextual bandit where for each $i\in[3]$, the feature distribution $F_i$ satisfies $F_i(x_i)=2/3$ and $F_i(-x_i)=1/3$, and $F_4$ is a point mass at $(0,0)$. Define a second contextual bandit $\mathbf{F}'=(F_1',F_2,F_3,F_4)$, where $F_1'(x_1)=1/3$ and $F_1'(-x_1)=2/3$. Let $f_1$ be some policy that is independent of the other players' observed rewards, and consider the policy profile $\mathbf{f}=(f_1,g_\gamma,g_\gamma)$.
\end{proof}

\section{Simulations}
\label{sec:simulation}
\begin{figure}[h]
	\includegraphics[width=\columnwidth]{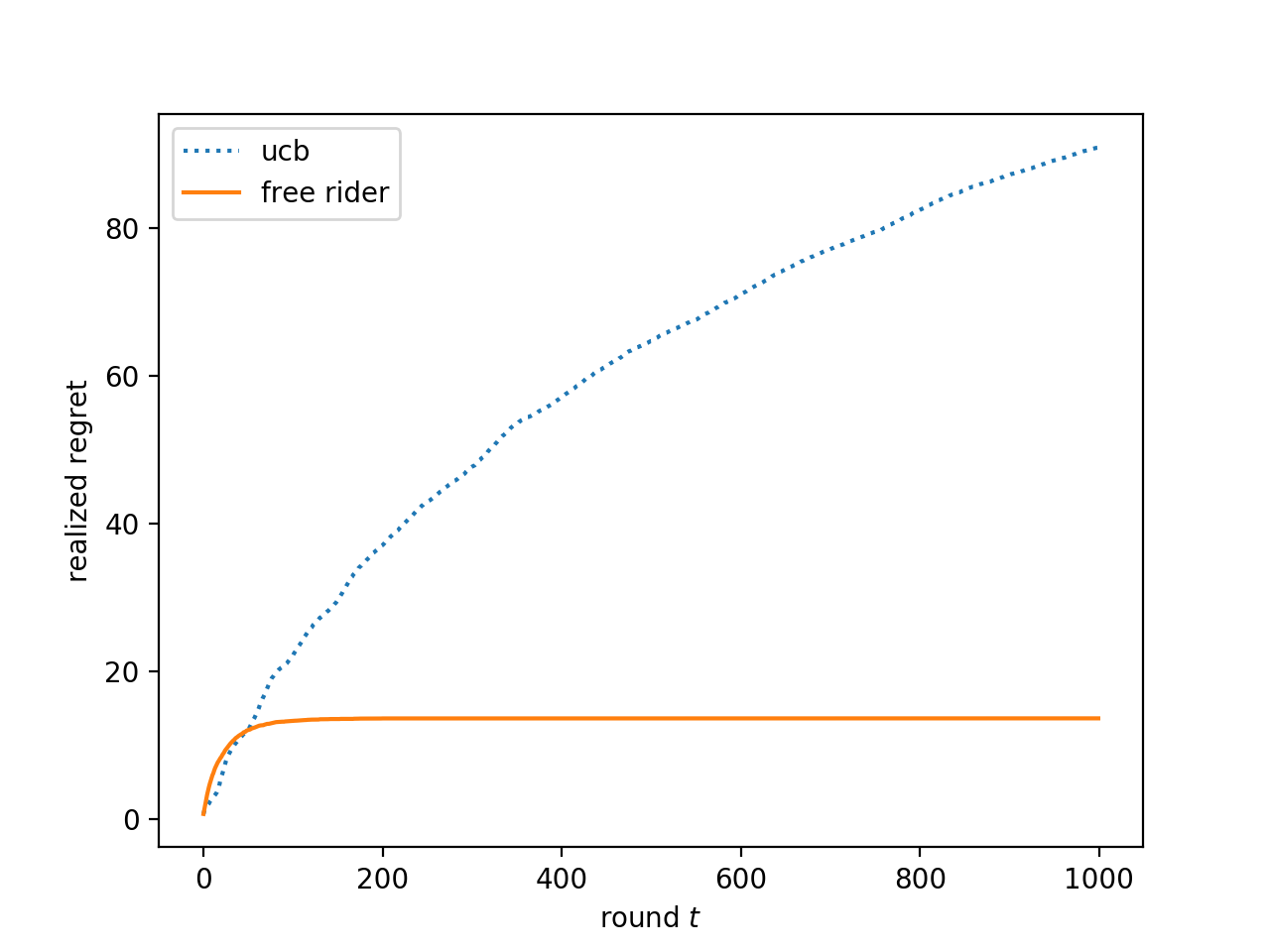}
	\caption{Average realized regret of a $2$-UCB player and a free rider over 100 simulations. There are 10 arms whose reward distributions are Bernoulli, with parameters $0.0, 0.1, \dots, 0.8, 0.9$.}
	\label{fig:stochastic_bernoulli}
\end{figure}
\begin{figure}[h]
	\includegraphics[width=\columnwidth]{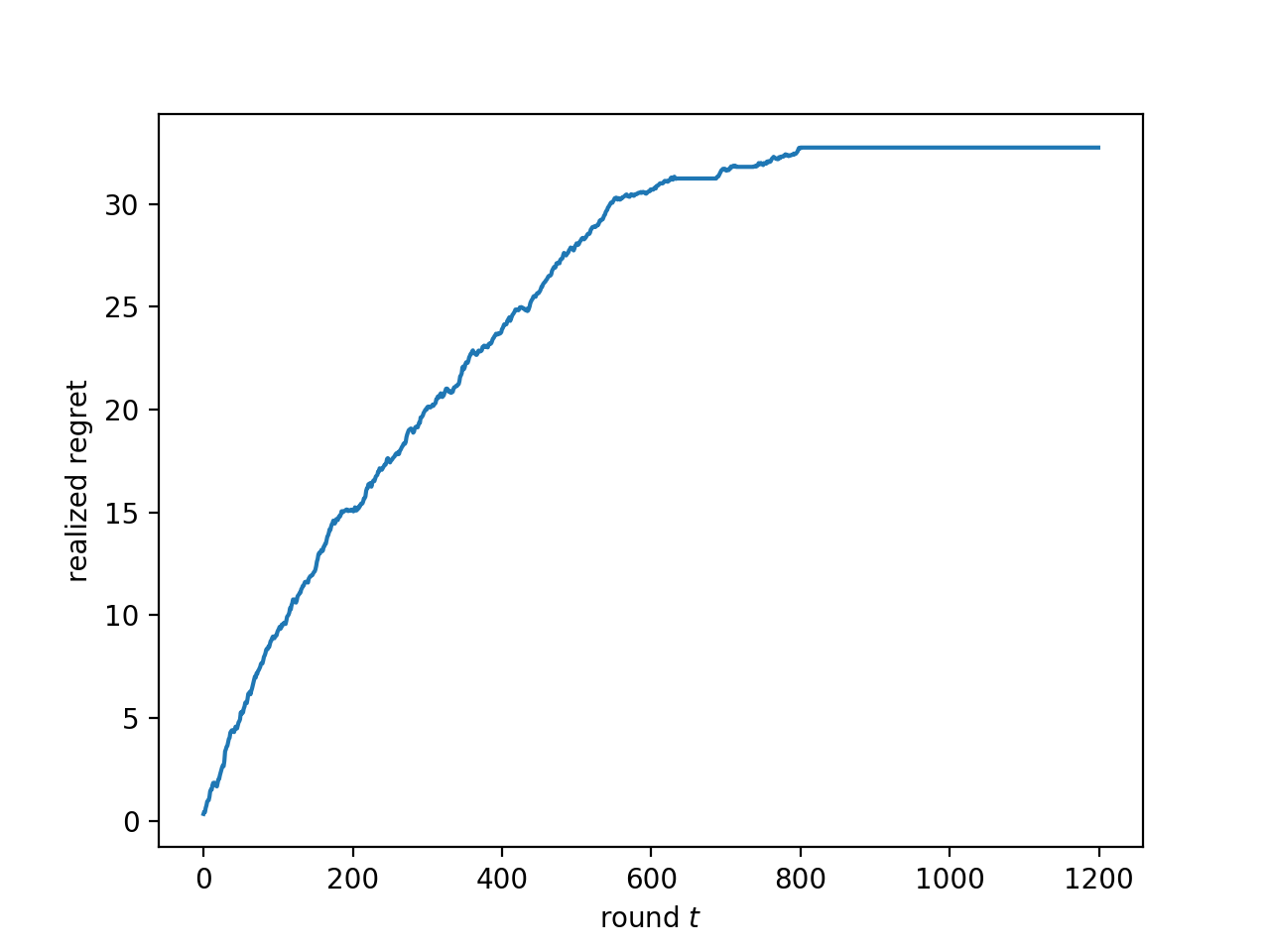}
	\caption{Average realized regret of the free rider over 10 simulations. There are 50 players playing $10$-UCB and 30 arms, the dimension of the vectors is 10, and $\Delta=0.1839$.}
	\label{fig:contextual_big}
\end{figure}

Finally, we present simulation results for both the stochastic and the contextual cases. In the stochastic case, we have the free rider simply pull the most pulled arm of the self-reliant player, and in the contextual case, we have the free rider pull the arm with the highest sample mean calculated by taking the linear combination of the sample means from the other players. In each of the experiments, we refer to $\max_{i\in[k]}\sum_{t=1}^T r^{t,i}-\sum_{t=1}^T r_p^t$ as the \emph{realized regret} for player $p$, where $r^{t,i}$ is the reward one would have observed by pulling arm $i$ in round $t$. For the stochastic case, we consider situations where the reward distribution $D_i$ of each arm $i$ is Bernoulli, with different parameters $p$. As shown in Figure \ref{fig:stochastic_bernoulli}, the realized regret of the free rider flattens out after some constant number of rounds, where this constant depends on the reward distribution profile $\mathbf{D}$. 

In the contextual case, the context $x_p$ for each player $p$ and the vector $\mathbf{c}$ (i.e., the coefficients for the linear combination of other players' contexts that gives the free rider's context), are all chosen uniformly at randomly from $[-1,1]$. The feature distribution $F_i$ for each arm $i$ is a multi-variate normal distribution with covariance matrix is $0.1\mathcal{I}$, where $\mathcal{I}$ is the identity matrix, and the mean vector is once again chosen by sampling each coordinate uniformly from $[-1,1]$. We normalize $x_p$ and the mean vector of $F_i$ so that the expected reward for each arm $i$ falls within $[-1,1]$ for every player $p$.

\section{Conclusion}
\label{sec:conclusions}
We have demonstrated that in the linear contextual setting, a free rider can successfully shirk the burden of exploration, achieving constant regret by observing other players engaged in standard learning behavior. Furthermore, we have shown that even with partial information and weaker assumptions on the other players' learning behaviors, the free rider can achieve constant regret in the simple stochastic setting. It would be interesting to examine richer settings. For example, exploring players need not be self-reliant, and both exploring players and free riders could play a range of strategies. As another example, when a free rider in the stochastic setting only sees the actions (and not the rewards) of the self-reliant players and does not know which of them are playing UCB or other zero-regret strategies, can he still achieve constant regret?  More realistically, users of a service like Yelp\texttrademark\ cannot be partitioned into self-reliant public learners and selfish free riders who keep their data private. It would be interesting to explore more nuanced player roles and to characterize the equilibria that arise from their interactions. Such a characterization might also suggest mechanisms for the deterrence of free riding or for incentivizing exploration.

\subsection*{Acknowledgment}

We thank Wenshuo Guo for alerting us to an error in the proof of an earlier iteration of Theorem~\ref{thm:sampleaugmenting}. While examining that error, we found a counterexample to our original statement, which led to the revised theorem given in the present version.

\bibliography{freeridebibliography}

\begin{thebibliography}{10}

\bibitem{audibert2009exploration}
Jean-Yves Audibert, R{\'e}mi Munos, and Csaba Szepesv{\'a}ri.
\newblock Exploration--exploitation tradeoff using variance estimates in
  multi-armed bandits.
\newblock {\em Theoretical Computer Science}, 410(19):1876--1902, 2009.

\bibitem{bastani2017mostly}
Hamsa Bastani, Mohsen Bayati, and Khashayar Khosravi.
\newblock Mostly exploration-free algorithms for contextual bandits.
\newblock {\em arXiv preprint arXiv:1704.09011}, 2017.

\bibitem{bolton1999strategic}
Patrick Bolton and Christopher Harris.
\newblock Strategic experimentation.
\newblock {\em Econometrica}, 67(2):349--374, 1999.

\bibitem{BreHub79}
J.~Bretagnolle and C.~Huber.
\newblock Estimation des densit\'{e}s: risque minimax.
\newblock {\em Zeitschrift f\"{u}r {W}ahrscheinlichkeitstheorie und verwandte
  {G}ebiete}, 47(2):119--137, 1979.

\bibitem{Bubeck12}
S{\'{e}}bastien Bubeck and Nicol{\`{o}} Cesa{-}Bianchi.
\newblock Regret analysis of stochastic and nonstochastic multi-armed bandit
  problems.
\newblock {\em Foundations and Trends in Machine Learning}, 5(1):1--122, 2012.

\bibitem{BuPeRi13}
S{\'{e}}bastien Bubeck, Vianney Perchet, and Philippe Rigollet.
\newblock Bounded regret in stochastic multi-armed bandits.
\newblock In {\em Proceedings of the 26th Annual Conference on Learning Theory
  ({COLT} '13), June 12--14, 2013, Princeton University, NJ, {USA}}, pages
  122--134, 2013.

\bibitem{celis2017lean}
L~Elisa Celis and Farnood Salehi.
\newblock Lean from thy neighbor: Stochastic \& adversarial bandits in a
  network.
\newblock {\em arXiv preprint arXiv:1704.04470}, 2017.

\bibitem{GarivierMS16}
Aur{\'{e}}lien Garivier, Pierre M{\'{e}}nard, and Gilles Stoltz.
\newblock Explore first, exploit next: The true shape of regret in bandit
  problems.
\newblock {\em CoRR}, abs/1602.07182, 2016.

\bibitem{kannan2018smoothed}
Sampath Kannan, Jamie Morgenstern, Aaron Roth, Bo~Waggoner, and Zhiwei~Steven
  Wu.
\newblock A smoothed analysis of the greedy algorithm for the linear contextual
  bandit problem.
\newblock {\em CoRR}, abs/1801.03423, 2018.

\bibitem{keller2005strategic}
Godfrey Keller, Sven Rady, and Martin Cripps.
\newblock Strategic experimentation with exponential bandits.
\newblock {\em Econometrica}, 73(1):39--68, 2005.

\bibitem{klein2013strategic}
Nicolas Klein.
\newblock Strategic learning in teams.
\newblock {\em Games and Economic Behavior}, 82:636--657, 2013.

\bibitem{LaiRobbins85}
Tze~Leung Lai and Herbert Robbins.
\newblock Asymptotically efficient adaptive allocation rules.
\newblock {\em Advances in applied mathematics}, 6(1):4--22, 1985.

\bibitem{BanditBook}
Tor Lattimore and Csaba Szepesv\'{a}ri.
\newblock {\em Bandit {A}lgorithms}.
\newblock Cambridge University Press, 2019.
\newblock Pre-publication version, revision
  8b22b8b6131c37e388d5e3b2eecf0b4ff5d7db92.

\bibitem{RaghavanSVW18}
Manish Raghavan, Aleksandrs Slivkins, Jennifer~Wortman Vaughan, and
  Zhiwei~Steven Wu.
\newblock The externalities of exploration and how data diversity helps
  exploitation.
\newblock In {\em Proceedings of the 31st Annual Conference On Learning Theory
  ({COLT} '18), Stockholm, Sweden, July 6--9 2018.}, pages 1724--1738, 2018.

\bibitem{Robbins52}
Herbert Robbins.
\newblock Some aspects of the sequential design of experiments.
\newblock {\em Bulletin of the American Mathematical Society}, 58(5):527--535,
  1952.

\bibitem{thompson1933likelihood}
William~R Thompson.
\newblock On the likelihood that one unknown probability exceeds another in
  view of the evidence of two samples.
\newblock {\em Biometrika}, 25(3/4):285--294, 1933.

\end{thebibliography}
\bibliographystyle{plain}
\end{document}